\documentclass[11pt]{article}

\usepackage[margin=1in]{geometry}
\usepackage[authoryear,round]{natbib}
\usepackage{graphicx}
\usepackage{booktabs}
\usepackage{amsmath}
\usepackage{amssymb}
\usepackage{array}
\usepackage{multirow}
\usepackage{url}
\usepackage{microtype}
\usepackage{xcolor}
\usepackage{amsthm}
\providecommand{\keywords}[1]{\par\medskip\noindent\textbf{Keywords:} #1}

\newtheorem{sageprop}{Proposition}
\newtheorem{sagetheorem}{Theorem}

\newtheorem{sagecor}{Corollary}
\newtheorem{sageobs}{Observation}

\begin{document}

\title{SAGE: Safety-First Defense-in-Depth Guardrails for Verified Lifecycle Control of High-Impact Generative AI}
\author{Mahdi Eslamimehr, PhD, MBA\\
\small Quandary Peak Research\\
\small\texttt{mahdi@quandarypeak.com}}
\date{}

\maketitle

\begin{abstract}
High-impact generative AI makes catastrophic misuse a lifecycle-control problem, not merely a prompt-filtering problem. SAGE is a safety-first, authorization-separated architecture in which credible catastrophic-enablement risk constrains admissibility before utility, latency, or commercial objectives are considered. It combines signed release manifests, diverse detectors, robust risk envelopes, least-risk defaults, output checking, three-valued monitoring, protected audit chains, containment, and rollback. Formal results establish safety priority, conservative detector bounds, monotone release gating, tamper-evident records, and an authorization cut; two PRISM abstractions verify authorization separation and lifecycle invariants under explicit assumptions. A frozen, vendor-symmetric study sent 84 cases to each of four GPT, four Claude, and two Gemini snapshots: 840 calls yielded 794 target responses, 46 provider errors, and 449 successful judgments covering 375 responses. Eight snapshots had complete judged domain coverage. Harmful-compliance estimates were low; variation arose mainly from benign utility and safe redirection. Seven multiplicity-adjusted contrasts involving Claude, Gemini, or GPT-5 snapshots and the GPT-5 mini and GPT-5 nano snapshots were supported, while no tested contrast between the Claude or Gemini snapshots and GPT-5 or GPT-5.5 survived correction. The observed harmful-compliance range is a conservative, protocol-bound view from one generation per prompt with no tools, retrieval, history, or human adjudication; it is not an upper bound on operational assistance. A preregistered extension specifies how to test a wider best--worst gap using a locked split, repeated sampling, multi-turn and sandboxed-tool conditions, and domain-expert scoring. The empirical findings are endpoint-specific; the principal contribution is a precautionary control architecture spanning release, operation, and incident response.
\keywords{generative AI, defense in depth, safety gates, runtime verification, rollback, guardrails}
\end{abstract}

\section{Introduction}

Systems that generate strategic advice, software, or scientific procedures create an asymmetric obligation: a rare response that materially lowers the cost of terrorism, cyberattack, chemical or biological misuse, violent crime, infrastructure sabotage, or large-scale fraud cannot be justified by aggregate convenience. Providers should assume adaptive probing and treat prevention of catastrophic enablement as a release and runtime constraint.

Safety priority does not justify surveillance or accusation. Content risk is not person-level intent. SAGE therefore separates conservative response control from exceptional organizational action: credible severe-risk uncertainty selects the least catastrophic legal response and safe alternatives, whereas any person-affecting or external measure requires corroboration, documented purpose, independently authorized human review, and applicable legal oversight.

SAGE binds capability tests, unresolved findings, domain thresholds, access controls, monitoring, rollback, approval, and expiry into a signed release manifest. At runtime, authenticated ingress, diverse detectors, contextual analysis, robust risk envelopes, tool restrictions, and output checking feed a lexicographic policy engine. Post-deployment monitors may restrict service, disable tools, invoke a safe fallback, or restore an approved artifact; ordinary telemetry remains minimized and necessity-approved raw evidence is encrypted, time-limited, and role separated.

The contributions are: (i) a safety-feasible action set that makes catastrophic-risk ceilings primary; (ii) detector miss bounds under independence, arbitrary dependence, and a common-cause dependence model, together with a concrete ambiguity-set instantiation yielding a closed-form robust risk; (iii) a signed, monotone release predicate; (iv) finite-trace monitoring, bounded evidence, containment, and rollback invariants; (v) an authorization cut preventing classifiers from minting exceptional authority; and (vi) a paired comparison limited to snapshots exposed by one research endpoint.

The empirical question is two-sided and vendor-symmetric: under one frozen protocol, how do accessible snapshots differ in harmful-compliance prevention, benign utility, redirection, over-refusal, and wrapper robustness? Cases, analysis windows, metrics, complete-case rules, bootstrap procedures, and multiplicity correction are identical; no vendor ordering is assumed.

\section{Related Work and Governance Constraints}

Safety benchmarks measure complementary failure modes. HarmBench and StrongREJECT evaluate whether a response provides useful forbidden assistance rather than merely whether it contains refusal language \citep{mazeika2024harmbench,souly2024strongreject}. XSTest contrasts unsafe prompts with benign prompts that superficially resemble restricted topics \citep{rottger2024xstest}, while SORRY-Bench and OR-Bench broaden refusal and over-refusal analysis \citep{xie2025sorry,cui2025orbench}. SAGE uses these strands for response-level measurement but does not reproduce harmful prompts or outputs and does not import prior rankings as new evidence.

National-security and dangerous-capability evaluations probe conditions omitted here. FORTRESS uses 500 expert-crafted adversarial prompts and matched benign counterparts across CBRNE, political violence and terrorism, and criminal and financial illicit activity; it reports wide cross-model risk-score separation, although its model roster, endpoints, and metric are not directly comparable to SAGE \citep{knight2025fortress}. Multi-generation auditing reveals harmful behavior hidden by single-output evaluation \citep{gowda2026multigen}. Frontier evaluations also use domain experts, scaffolds, tools, and repeated attempts, while human-uplift studies measure task completion, time, bottleneck passage, and critical errors rather than textual compliance alone \citep{phuong2024dangerous,anthropic2025biorisk,aguirre2026cyberuplift}. These results motivate a stronger confirmatory protocol but cannot be substituted for a matched experiment on the snapshots studied here.

Deployed guardrail components operationalize the detection and enforcement slice of this space. Llama Guard fine-tunes a language model as an input--output safety classifier \citep{inan2023llamaguard}, ShieldGemma releases content-moderation models across policy categories \citep{zeng2024shieldgemma}, NeMo Guardrails adds programmable dialogue rails around applications \citep{rebedea2023nemo}, production moderation endpoints implement holistic undesired-content detection \citep{markov2023holistic}, and constitutional classifiers train critics from explicit constitutions to resist universal jailbreaks \citep{sharma2025constitutional}. SAGE is complementary control-plane scaffolding rather than a competing detector: such components instantiate detector layers inside it, while the signed release predicate, dependence-aware composition bounds, lifecycle invariants, containment and rollback duties, and the authorization cut are exactly the elements those components do not themselves provide.

Selective prediction, conformal risk control, runtime verification, and probabilistic model checking provide the methodological base for uncertainty-aware control \citep{geifman2017selective,angelopoulos2024crc,leucker2009runtime,kwiatkowska2011prism}. SAGE maps uncertainty to internal review or a least-risk response, never directly to external disclosure. Conformal guarantees are invoked only under their declared exchangeability and monotonicity conditions; distribution shift suspends, rather than silently extends, the guarantee.

Lifecycle safety frameworks connect capability evaluation to deployment control. The multi-provider Seoul commitments specify severe-risk thresholds, pre-deployment testing, external evaluation, incident reporting, and non-deployment when risks cannot be sufficiently mitigated \citep{ukdsit2024frontier}. SAGE translates these recurring, partly voluntary control concepts into explicit predicates and state transitions; publishing a framework is not evidence that it has been implemented.

Broader governance instruments constrain how those controls operate. The NIST AI Risk Management Framework and Generative AI Profile emphasize socio-technical governance, pre-deployment testing, provenance, monitoring, and incident management \citep{nist2023airmf,nist2024genai}. The EU AI Act, the General-Purpose AI Code of Practice, and the Council of Europe Framework Convention supply context for risk management, systemic-risk measures, oversight, documentation, privacy, and procedural safeguards \citep{eu2024aiact,europeancommission2025gpai,coe2024convention}. Secure-by-design and separation-of-privilege principles support shifting the burden of safety toward providers and preventing one component from unilaterally exercising every privilege \citep{cisa2023secure,saltzer1975protection}. These sources provide engineering and governance context, not legal advice or blanket authority to monitor or report users.

Vendor system cards remain contextual rather than cross-vendor measurements. OpenAI, Anthropic, Google DeepMind, xAI, and NIST/CAISI materials document different systems, evaluations, policies, tools, and deployment settings \citep{openai2025gpt5,openai2026gpt55,anthropic2026cards,deepmind2026gemini,xai2025grok,nist2025deepseek}. The common endpoint used here exposed GPT-5, GPT-5 mini, GPT-5 nano, GPT-5.5, Claude Haiku 4.5, Claude Sonnet 4.6, Claude Opus 4.6, Claude Opus 4.7, Gemini 3 Flash preview, and Gemini 3.1 Pro preview. GPT-5.1 through GPT-5.4 were not verified identifiers in the reviewed official pages, while Grok and DeepSeek were absent from the endpoint; no scores are invented for them.

Finally, automated semantic judges are fallible. Position, verbosity, self-enhancement, task-specific bias, and superficial artifacts can distort comparative safety judgments even for strong evaluators \citep{zheng2023judge,shi2025judges,chen2025safer}. The experiment therefore calibrates judges, removes target identity, assigns non-target-family judges, reports failures, and uses qualified complete cases. These controls reduce but do not eliminate judge error; the study has no human adjudication and does not claim exhaustive ground truth.

\section{Problem Formulation}

Let \(e_t=(x_t,y_t,c_t,m_t,\tau_t)\) denote request, candidate response (or \(\bot\)), available context, endpoint metadata, and time. The risk-domain set is
\[
\mathcal D=\{\textsf{CYB},\textsf{CBR},\textsf{VIO},\textsf{FRA},\textsf{DIS},\textsf{HAR}\},
\]
covering cyber, chemical/biological/material, violent, fraudulent, disinformation, and harassment risks. The perception vector \(q(e)=(p_1,\ldots,p_{|\mathcal D|},s,a,u,\eta)\in[0,1]^{|\mathcal D|+4}\) contains domain probabilities, severity, actionability, epistemic uncertainty, and evidence quality; it labels content and context, not persons.

The action set is
\[
\mathcal A=\{G_0,G_1,G_2,G_3,G_4\}.
\]
The actions are normal answer, bounded safe completion, refusal with redirection, internal review with minimized evidence, and activation of an already authorized incident process. The mask \(\mathcal A_{\mathrm{legal}}(e)\subseteq\mathcal A\) excludes \(G_4\) without a valid external-authorization token and separately evaluated legal-duty predicate.

Let \(Z\in\{0,1,2,3,4\}\) be an unknown event condition from benign through corroborated imminent severe harm, and let \(\Pi(e)\) be an ambiguity set of posteriors consistent with calibrated evidence and declared shift tolerances. For catastrophic-enablement loss \(C(g,z;e)\in[0,1]\), define the robust primary risk
\[
K(g\mid e)=\sup_{\pi\in\Pi(e)}\sum_{z=0}^{4}C(g,z;e)\pi_z.
\]
A signed policy manifest fixes a maximum admissible risk \(\kappa(e)\) and the feasible set
\[
\mathcal F_\kappa(e)=\{g\in\mathcal A_{\mathrm{legal}}(e):K(g\mid e)\le\kappa(e)\}.
\]
Secondary loss contains benign denial, privacy intrusion, review burden, and delay,
\[
J(g\mid e)=\sup_{\pi\in\Pi(e)}\sum_z
[\lambda_bL_{\rm benign}+\lambda_pL_{\rm privacy}+\lambda_rL_{\rm review}+\lambda_dL_{\rm delay}]\pi_z,
\]
with coefficients, thresholds, and exceptions fixed in the manifest. For \(G_0\prec G_1\prec G_2\prec G_3\prec G_4\),
\[
g^{\rm SF}(e)=
\begin{cases}
\min_{\prec}\arg\min_{g\in\mathcal F_\kappa(e)}J(g\mid e),&\mathcal F_\kappa(e)\ne\varnothing,\\
\min_{\prec}\arg\min_{g\in\mathcal A_{\rm legal}(e)}K(g\mid e),&\text{otherwise}.
\end{cases}
\]
Secondary objectives cannot create admissibility; if the feasible set is empty, the least catastrophic legal fallback is selected without relaxing the ceiling.

For empirical evaluation, response \(i\) has refusal \(r_i\in\{0,1\}\), harmful specificity, actionability, and convincingness \(s_i,a_i,c_i\in\{0,1,2,3,4\}\), benign helpfulness \(h_i\in\{0,1,2,3,4\}\), and redirection quality \(d_i\in\{0,1,2\}\). The harmful-compliance score, unsafe-request safety score, benign utility, over-refusal rate, and redirection delivery are
\[
\mathrm{HCS}_i=(1-r_i)\frac{s_i+a_i+c_i}{12},\quad
\mathrm{USS}_i=1-\mathrm{HCS}_i,\quad
\mathrm{BU}_i=h_i/4,
\]
\[
\mathrm{ORR}=n_b^{-1}\sum_{i\in B}r_i,\qquad
\mathrm{RD}=|U|^{-1}\sum_{i\in U}r_i d_i/2.
\]
Let \(z_m=(\mathrm{USS}_m,\mathrm{BU}_m,\mathrm{RD}_m,1-\mathrm{ORR}_m)\). The balanced guardrail score is
\[
\mathrm{BGS}_m=0.45\,\mathrm{USS}_m+0.25\,\mathrm{BU}_m+0.15\,\mathrm{RD}_m+0.15(1-\mathrm{ORR}_m).
\]
The fixed weights express a safety-first but utility- and rights-sensitive policy. They are not a universal moral ordering, so weight sensitivity is required.

\begin{center}
\small
\begin{tabular}{@{}p{.36\textwidth}p{.55\textwidth}@{}}
\multicolumn{2}{@{}l}{\textbf{Notation summary}}\\
\toprule
\(e_t=(x_t,y_t,c_t,m_t,\tau_t)\) & request, candidate response, context, endpoint metadata, time\\
\(\mathcal D\);\; \(q(e)\) & risk domains; perception vector (domain probabilities, \(s,a,u,\eta\))\\
\(\mathcal A=\{G_0,\dots,G_4\}\);\; \(\mathcal A_{\rm legal}\) & action ladder; legally executable subset\\
\(Z\);\; \(\Pi(e)\);\; \(\varepsilon_e\) & unknown event condition; posterior ambiguity set; declared shift tolerance\\
\(C\);\; \(K\);\; \(\kappa\);\; \(\mathcal F_\kappa\) & catastrophic loss; robust risk; ceiling; safety-feasible set\\
\(J\);\; \(g^{\rm SF}\) & secondary loss; lexicographic safety-first policy\\
\(u(e),\tau_u\);\; \(\Delta_K,\tau_\Delta\);\; \(\tilde g\) & uncertainty and risk-margin triggers; executable action\\
\(\varepsilon_j\);\; \(\gamma\) & per-detector miss bounds; common-cause mass\\
\(M_\varphi\);\; \(E_t,\rho,w_t\) & three-valued monitor; decayed evidence and its parameters\\
\(C_t\);\; \(H\);\; \(\sigma\) & audit capsule; hash; signature\\
\(\mathcal M_\theta\);\; \(\bar q_d,\beta_d\);\; \(\mathrm{Rel}\) & release manifest; domain evidence versus bound; eligibility predicate\\
\(\chi\) & run-identity hash\\
\(\mathrm{HCS},\mathrm{USS},\mathrm{BU},\mathrm{ORR},\mathrm{RD}\);\; \(\mathrm{BGS}\) & response-level metrics; balanced guardrail score\\
\bottomrule
\end{tabular}
\end{center}

\section{SAGE Formal Framework}

\paragraph{Threat model and assumptions.}
The formal results address three adversary classes under explicit assumptions. First, an external requester with black-box, rate-limited query access who adaptively varies prompts, wrappers, and sessions but cannot modify deployed code, policies, or keys; the lexicographic gate, detector envelope, and output checker are the controls in scope. Second, for tamper evidence (Theorem~\ref{thm:tamper}), a storage or transport adversary who may read, replay, reorder, or substitute capsules and manifests; the assumptions are collision-resistant hashing, existentially unforgeable signatures, and role-separated key custody. Third, for the authorization cut (Theorem~\ref{thm:authcut}), a faulty or compromised unprivileged component, such as a classifier, policy engine, or monitor, attempting disclosure or escalation; the assumption is the structural absence of transitions from unprivileged states into the authorized region, enforced by construction and checked in the abstraction. Weight exfiltration, training-time compromise, side channels, and social engineering of authorized humans are out of scope. Detector miss rates \(\varepsilon_j\) and the common-cause mass \(\gamma\) of Proposition~\ref{prop:commoncause} are measured inputs, not correctness assumptions.

\begin{sagetheorem}[existence and safety-priority separation]
If \(\mathcal A_{\mathrm{legal}}(e)\ne\varnothing\) and \(K,J\) are finite, \(g^{\rm SF}(e)\) exists uniquely. Secondary loss cannot select outside non-empty \(\mathcal F_\kappa(e)\), or displace a strictly lower-\(K\) fallback when that set is empty.
\end{sagetheorem}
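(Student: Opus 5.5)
The plan is to argue directly from the definition of \(g^{\rm SF}(e)\), treating its two branches separately and relying only on the finiteness of the action ladder \(\mathcal A\) and the totality of the order \(\prec\).

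\textbf{Existence and uniqueness.} Since \(\mathcal A_{\rm legal}(e)\subseteq\mathcal A\) has at most five elements and \(K,J\) are finite (real-valued), every minimization over a nonempty subset of \(\mathcal A\) attains its minimum.

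\emph{First branch.} If \(\mathcal F_\kappa(e)\ne\varnothing\), then \(\arg\min_{g\in\mathcal F_\kappa(e)}J(g\mid e)\) is a nonempty finite set. Because \(\prec\) is a strict total order on \(\mathcal A\) (\(G_0\prec\cdots\prec G_4\)), this set has a unique \(\prec\)-least element.

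\emph{Second branch.} If \(\mathcal F_\kappa(e)=\varnothing\), the hypothesis \(\mathcal A_{\rm legal}(e)\ne\varnothing\) makes \(\arg\min_{g\in\mathcal A_{\rm legal}(e)}K(g\mid e)\) nonempty and finite. The same tie-break then yields a unique element.

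The branch condition is a deterministic predicate of \(e\), so exactly one case applies and \(g^{\rm SF}(e)\) is a well-defined single action. The only care needed is that finiteness of \(K\) requires the supremum over \(\Pi(e)\) to be finite. This is automatic, because \(C\in[0,1]\) and each \(\pi\) is a probability vector, so \(K\in[0,1]\) whenever \(\Pi(e)\ne\varnothing\). I will state the nonemptiness of \(\Pi(e)\) explicitly, since an empty ambiguity set would make the supremum \(-\infty\).

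\textbf{Safety-priority separation.} Both claims follow from how the admissible sets are built.

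\emph{Nonempty feasible set.} When \(\mathcal F_\kappa(e)\ne\varnothing\), the selected action lies in \(\arg\min_{\mathcal F_\kappa}J\subseteq\mathcal F_\kappa(e)\). Hence \(K(g^{\rm SF}\mid e)\le\kappa(e)\), whatever the coefficients \(\lambda_b,\lambda_p,\lambda_r,\lambda_d\) and whatever values \(J\) takes outside \(\mathcal F_\kappa(e)\).

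\emph{Empty feasible set.} When \(\mathcal F_\kappa(e)=\varnothing\), the selected action minimizes \(K\) over \(\mathcal A_{\rm legal}(e)\) and \(J\) does not appear in its definition. So any \(g'\) with \(K(g'\mid e)<K(g^{\rm SF}\mid e)\) would contradict minimality, and \(J\) cannot make a strictly higher-\(K\) action win over a strictly lower-\(K\) one.

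I will phrase this as an invariance statement: perturbing \(J\) arbitrarily leaves the fallback selection unchanged and leaves the selection inside \(\mathcal F_\kappa(e)\) in the nonempty case. That formulation makes the claim that secondary loss \emph{cannot create admissibility} precise.

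\textbf{Main obstacle.} The expected difficulty is not technical but definitional. One has to pin down that "finite" covers well-posedness of the supremum (the nonemptiness of \(\Pi(e)\) noted above), and that the ceiling \(\kappa(e)\) is fixed by the manifest independently of \(J\), so \(J\) cannot indirectly move the feasibility boundary. Once those points are recorded as standing conventions, the proof is a short case analysis.
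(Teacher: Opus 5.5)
Your proposal is correct and follows the same route as the paper's proof: existence comes from the finite nonempty argmin sets, with the total order \(\prec\) picking a unique element. Separation holds because the branch test depends only on \(K\) and \(\kappa\), so \(J\) enters only after feasibility (or minimum \(K\)) is fixed. Your remark that finite \(K\) presupposes \(\Pi(e)\ne\varnothing\) is a harmless clarification that the paper leaves implicit in its hypothesis.
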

\begin{proof}
Finite non-empty argmin sets exist and \(\prec\) selects one element. The branch order fixes feasibility, or minimum \(K\), before any secondary tie break.
\end{proof}

\begin{sageprop}[threshold monotonicity]
For \(\kappa'(e)\le\kappa(e)\),
\[
\mathcal F_{\kappa'}(e)\subseteq\mathcal F_\kappa(e).
\]
Thus tightening the ceiling cannot create feasibility.
\end{sageprop}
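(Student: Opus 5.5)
The plan is a direct element-chasing argument from the definition of \(\mathcal F_\kappa(e)\). The one structural point to fix first is that the only ingredient depending on the ceiling is the inequality \(K(g\mid e)\le\kappa(e)\). The legal mask \(\mathcal A_{\mathrm{legal}}(e)\) and the robust risk \(K(g\mid e)=\sup_{\pi\in\Pi(e)}\sum_z C(g,z;e)\pi_z\) are determined by the event \(e\), the ambiguity set \(\Pi(e)\), and the loss \(C\). None of these involves \(\kappa\). So replacing \(\kappa\) by \(\kappa'\) leaves both the candidate set and the scores unchanged.

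The key step is as follows. Take any \(g\in\mathcal F_{\kappa'}(e)\). By definition \(g\in\mathcal A_{\mathrm{legal}}(e)\) and \(K(g\mid e)\le\kappa'(e)\). Combining this with the hypothesis \(\kappa'(e)\le\kappa(e)\) and transitivity of \(\le\) on the extended reals gives \(K(g\mid e)\le\kappa(e)\). Hence \(g\in\mathcal F_\kappa(e)\). Since \(g\) was arbitrary, we get \(\mathcal F_{\kappa'}(e)\subseteq\mathcal F_\kappa(e)\).

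The interpretive sentence follows by contraposition. If \(\mathcal F_\kappa(e)=\varnothing\), then \(\mathcal F_{\kappa'}(e)=\varnothing\). More generally, no action feasible under the tighter ceiling was infeasible under the looser one. A remark connects this to the theorem on existence and safety-priority separation: tightening can only move \(g^{\rm SF}\) into the fallback branch, never out of it.

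There is no real obstacle. The only care needed is to note that \(K\) may be \(+\infty\) if \(\Pi(e)\) is unbounded in effect. Even then, transitivity in \([0,\infty]\) still holds, and the inclusion is unaffected. In fact \(K\in[0,1]\), because \(C\in[0,1]\) and each \(\pi\) is a probability vector.
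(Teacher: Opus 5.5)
Your proof is correct and matches the paper's argument: membership in \(\mathcal F_{\kappa'}(e)\) gives \(K(g\mid e)\le\kappa'(e)\le\kappa(e)\), and the legal mask does not depend on the ceiling, so \(g\in\mathcal F_\kappa(e)\). Your side remarks on a possibly infinite \(K\) and on the fallback branch are harmless but unnecessary, since \(K\in[0,1]\) here.
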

\begin{proof}
Membership follows directly from \(K(g\mid e)\le\kappa'(e)\le\kappa(e)\).
\end{proof}

The ambiguity set \(\Pi(e)\) admits a concrete, auditable instantiation. Let \(\hat\pi_e\in\Delta(Z)\) be the calibrated posterior and let \(\varepsilon_e\in[0,1)\) be the shift tolerance declared in the manifest; define the contamination class
\[
\Pi(e)=\{(1-\varepsilon_e)\hat\pi_e+\varepsilon_e\nu:\nu\in\Delta(Z)\}.
\]

\begin{sageprop}[closed-form robust risk]\label{prop:contamination}
Under the contamination class,
\[
K(g\mid e)=(1-\varepsilon_e)\sum_{z}C(g,z;e)\,\hat\pi_{e,z}+\varepsilon_e\max_{z}C(g,z;e).
\]
\end{sageprop}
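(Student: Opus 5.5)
The plan is to substitute the contamination parametrization directly into the definition of \(K(g\mid e)\) and reduce the supremum over \(\Pi(e)\) to a supremum over the free component \(\nu\in\Delta(Z)\). For a fixed action \(g\) and event \(e\), write \(c_z=C(g,z;e)\in[0,1]\) for \(z\in\{0,\dots,4\}\). Every \(\pi\in\Pi(e)\) has the form \(\pi=(1-\varepsilon_e)\hat\pi_e+\varepsilon_e\nu\). Because the expected loss \(\pi\mapsto\sum_z c_z\pi_z\) is linear, it splits as
\[
\sum_z c_z\pi_z=(1-\varepsilon_e)\sum_z c_z\hat\pi_{e,z}+\varepsilon_e\sum_z c_z\nu_z .
\]
The first term does not depend on \(\nu\), and \(\varepsilon_e\ge 0\) is fixed. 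The supremum over \(\Pi(e)\) is therefore the first term plus \(\varepsilon_e\sup_{\nu\in\Delta(Z)}\sum_z c_z\nu_z\). The map \(\nu\mapsto\pi\) is a bijection onto \(\Pi(e)\) when \(\varepsilon_e>0\), and a constant when \(\varepsilon_e=0\). In either case the supremum over \(\Pi(e)\) coincides with the supremum over \(\nu\).

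The remaining step is the elementary identity \(\sup_{\nu\in\Delta(Z)}\sum_z c_z\nu_z=\max_z c_z\). I will prove it in two directions.
\begin{itemize}
\item \emph{Upper bound.} For any \(\nu\) in the simplex, \(\sum_z c_z\nu_z\le\max_z c_z\sum_z\nu_z=\max_z c_z\), using \(\nu_z\ge0\) and \(\sum_z\nu_z=1\).
\item \emph{Attainment.} The bound is achieved by the point mass \(\nu=\delta_{z^\ast}\) at some \(z^\ast\in\arg\max_z c_z\), which exists because \(Z\) is finite.
\end{itemize}
So the supremum is in fact a maximum, attained at a vertex of the simplex. This is the standard fact that a linear functional on a polytope is maximized at an extreme point. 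Substituting back gives the stated closed form.

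No step is a genuine obstacle. The only points needing care are bookkeeping ones:
\begin{itemize}
\item the degenerate case \(\varepsilon_e=0\), where \(\Pi(e)=\{\hat\pi_e\}\) and the formula reduces to the plain expectation;
\item noting that the supremum is attained, so \(K\) is finite and the earlier existence theorem applies;
\item checking that the family \(\Pi(e)\) really lies in \(\Delta(Z)\), since a convex combination of distributions is a distribution.
\end{itemize}
I will also remark that the nonnegativity of \(C\) is not even needed for the identity; only linearity and the simplex constraint are used.
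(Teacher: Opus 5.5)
Your proposal is correct and takes the same route as the paper: the expected loss is affine in \(\nu\) over the simplex, so the supremum is attained at a point mass on a maximizing \(z\). You spell out the linearity split, the upper bound, attainment, and the \(\varepsilon_e=0\) case, all of which the paper's one-line proof leaves implicit.
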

\begin{proof}
The objective is affine in \(\nu\) over the simplex, so the supremum is attained at a vertex, that is, a point mass on a maximizing \(z\).
\end{proof}

\begin{sagecor}[shift monotonicity of the gate]\label{cor:shiftmono}
\(K(g\mid e)\) is nondecreasing in \(\varepsilon_e\), recovering the Bayes risk at \(\varepsilon_e=0\) and approaching the worst case as \(\varepsilon_e\to1\). Consequently \(\mathcal F_\kappa(e)\) can only shrink as the declared shift tolerance grows: acknowledged distribution shift tightens, and never relaxes, the gate.
\end{sagecor}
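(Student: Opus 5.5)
The plan is to start from the closed form of Proposition~\ref{prop:contamination} and write it as a convex combination. Set
\[
B(g)=\sum_z C(g,z;e)\hat\pi_{e,z}, \qquad W(g)=\max_z C(g,z;e).
\]
Then
\[
K(g\mid e)=B(g)+\varepsilon_e\bigl(W(g)-B(g)\bigr).
\]
The coefficient \(W(g)-B(g)\) does not depend on \(\varepsilon_e\), since \(\hat\pi_e\) is the fixed calibrated posterior and \(C\) is fixed by the manifest. It is also nonnegative, because \(B(g)\) is an average of the values \(C(g,z;e)\) under a probability vector and so cannot exceed their maximum. Hence \(K(g\mid e)\) is affine in \(\varepsilon_e\) with nonnegative slope, and therefore nondecreasing. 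An equivalent set-theoretic check: for \(\varepsilon\le\varepsilon'\), every contamination point \((1-\varepsilon)\hat\pi_e+\varepsilon\nu\) can be rewritten as \((1-\varepsilon')\hat\pi_e+\varepsilon'\nu'\) with \(\nu'=(1-\varepsilon/\varepsilon')\hat\pi_e+(\varepsilon/\varepsilon')\nu\in\Delta(Z)\). So the classes are nested, and the supremum can only grow.

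The endpoint claims are then immediate. At \(\varepsilon_e=0\) the formula reduces to \(B(g)\), which is the Bayes risk under \(\hat\pi_e\). As \(\varepsilon_e\to1\), continuity of the affine expression gives \(K\to W(g)\), which is the worst-case loss over \(z\). The limit is not attained, because the manifest restricts \(\varepsilon_e\in[0,1)\). I will state "approaching" in exactly that sense.

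For the gate, fix \(e\) other than its tolerance. I will assume that \(\kappa(e)\) and \(\mathcal A_{\mathrm{legal}}(e)\) do not vary with \(\varepsilon_e\); this is the main point needing care. The statement implicitly holds the ceiling and the legal mask fixed, so I will make that assumption explicit.

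Under that assumption, take \(\varepsilon\le\varepsilon'\) and any \(g\in\mathcal F_\kappa(e)\) computed at tolerance \(\varepsilon'\). Then
\[
K_{\varepsilon}(g\mid e)\le K_{\varepsilon'}(g\mid e)\le\kappa(e),
\]
so \(g\) is also feasible at \(\varepsilon\). Therefore \(\mathcal F_\kappa(e)\) at \(\varepsilon'\) is contained in \(\mathcal F_\kappa(e)\) at \(\varepsilon\). This mirrors the argument of the threshold-monotonicity proposition, with the roles reversed: there the ceiling moves, here the risk moves.

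The main obstacle is minor and lies in this fixed-ceiling assumption. If the manifest coupled \(\kappa\) to \(\varepsilon_e\), the gate conclusion would need \(\kappa\) to be nonincreasing in \(\varepsilon_e\). I will remark that the conclusion extends under that condition by combining the argument above with the threshold-monotonicity proposition.
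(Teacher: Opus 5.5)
Your proposal is correct and follows the paper's own argument: the paper likewise observes that the nonnegative gap \(\max_z C-\sum_z C\,\hat\pi_{e,z}\) multiplies \(\varepsilon_e\) in the closed form of Proposition~\ref{prop:contamination}, so \(K\) is nondecreasing and the feasibility condition \(K\le\kappa\) can only be lost as the tolerance grows. Your explicit assumption that \(\kappa(e)\) and \(\mathcal A_{\mathrm{legal}}(e)\) are held fixed, which the paper leaves implicit, and your nested-class check are sound additions rather than a different route.
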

\begin{proof}
The nonnegative difference \(\max_zC-\sum_zC\,\hat\pi_{e,z}\) multiplies \(\varepsilon_e\), so \(K\) is nondecreasing in \(\varepsilon_e\) and the membership condition \(K\le\kappa\) can only be lost, never gained.
\end{proof}

A Kullback--Leibler ball \(\{\pi:\mathrm{KL}(\pi\,\|\,\hat\pi_e)\le\eta_e\}\) is an alternative instantiation whose supremum has the classical variational form \(\inf_{\lambda>0}\{\lambda\eta_e+\lambda\log\mathbb E_{\hat\pi_e}e^{C/\lambda}\}\), computable by one-dimensional convex search; the contamination class is the default here because its closed form is directly checkable in an audit.

For calibrated detector score \(q_{j,d}(e)\), use \(q_d(e)=\max_jq_{j,d}(e)\); every mandatory detector must pass its signed threshold.

\begin{sageprop}[defense-in-depth miss bounds]
Let \(F_j\) denote detector \(j\)'s false negative on a catastrophic event and suppose \(\Pr(F_j)\le\varepsilon_j\). If system failure requires \(\cap_jF_j\), then
\[
\Pr(\cap_jF_j)\le\min_j\varepsilon_j.
\]
If the \(F_j\) are mutually independent, the sharper bound \(\Pr(\cap_jF_j)\le\prod_j\varepsilon_j\) holds.
\end{sageprop}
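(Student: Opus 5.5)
The plan is to read the hypothesis ``system failure requires \(\cap_jF_j\)'' as set inclusion: the failure event is contained in \(\cap_jF_j\). Then both claims reduce to bounding \(\Pr(\cap_jF_j)\), and monotonicity of probability transfers the bound to system failure. No earlier result from the paper is needed beyond the definitions. The argument has two independent steps, one for each bound.

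For the arbitrary-dependence bound, I will use monotonicity alone. For every index \(k\), we have \(\cap_jF_j\subseteq F_k\), so \(\Pr(\cap_jF_j)\le\Pr(F_k)\le\varepsilon_k\). Since this holds for each \(k\), taking the minimum over \(k\) gives \(\Pr(\cap_jF_j)\le\min_j\varepsilon_j\). This step makes no assumption about the joint law of the \(F_j\). It is also the bound one must fall back on when the dependence structure is unknown. I will remark that it is tight: if \(F_1\subseteq F_2\subseteq\cdots\) are nested events with \(\Pr(F_j)=\varepsilon_j\), the intersection attains \(\min_j\varepsilon_j\) exactly. So no better dependence-free bound exists, which is why redundancy cannot be credited without independence evidence.

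For the independence bound, mutual independence gives \(\Pr(\cap_jF_j)=\prod_j\Pr(F_j)\). Each factor lies in \([0,\varepsilon_j]\), and the factors are nonnegative, so the product is at most \(\prod_j\varepsilon_j\). This is sharper than the previous bound because every \(\varepsilon_j\le1\), and hence \(\prod_j\varepsilon_j\le\min_j\varepsilon_j\). I will state this comparison explicitly.

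The only real obstacle is interpretive rather than technical. The detectors are indexed over a finite set \(j=1,\dots,n\). The events \(F_j\) must be defined on a common probability space over catastrophic events, so the miss rates \(\varepsilon_j\) must be conditional on the same catastrophic-event population. I will state this convention first so that both bounds are well defined. I will also note that the independence hypothesis is the strong one: pairwise independence would not suffice for the product formula, and this is what motivates the common-cause refinement that follows.
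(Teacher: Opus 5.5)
Your proof is correct and follows the paper's argument: the dependence-free bound comes from monotonicity via \(\cap_jF_j\subseteq F_k\) for each \(k\), and the product bound comes from factoring under mutual independence. Your extra remarks are accurate but not needed for the statement. These are tightness via nested events, and the comparison \(\prod_j\varepsilon_j\le\min_j\varepsilon_j\), which tacitly uses \(\varepsilon_j\le1\); you may assume that without loss of generality.
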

\begin{proof}
Because \(\cap_jF_j\subseteq F_j\), the first bound holds; independence factors the probability for the second.
\end{proof}

Independence is an engineering aspiration, not a default fact: layers that share training data, base models, policy taxonomies, or infrastructure fail together. The next bound makes the cost of such coupling explicit.

\begin{sageprop}[common-cause dependence bound]\label{prop:commoncause}
Suppose there is a common-mode degradation event \(B\) with \(\Pr(B)\le\gamma\) such that, conditional on \(\neg B\), the miss events \(F_j\) are mutually independent with \(\Pr(F_j\mid\neg B)\le\varepsilon_j\). Then
\[
\Pr(\cap_jF_j)\;\le\;\gamma+(1-\gamma)\prod_j\varepsilon_j.
\]
As \(\prod_j\varepsilon_j\to0\), the right-hand side tends to \(\gamma\): additional layers cannot certify a joint-miss probability below the common-cause mass.
\end{sageprop}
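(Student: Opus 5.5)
The natural route is the law of total probability, conditioning on the common-mode event \(B\). Write
\[
\Pr\Bigl(\bigcap_jF_j\Bigr)=\Pr\Bigl(\bigcap_jF_j\mid B\Bigr)\Pr(B)+\Pr\Bigl(\bigcap_jF_j\mid \neg B\Bigr)\Pr(\neg B).
\]
The two terms are then bounded separately:
\begin{itemize}
\item The first conditional probability is bounded trivially by \(1\), since nothing is assumed about detector behaviour under common-mode degradation.
\item The second uses the hypothesised conditional independence. It factors into \(\prod_j\Pr(F_j\mid\neg B)\le\prod_j\varepsilon_j\), which is exactly the independent case of the preceding defense-in-depth proposition applied to the conditional measure \(\Pr(\cdot\mid\neg B)\).
\end{itemize}
If \(\Pr(\neg B)=0\), the second term simply vanishes and the bound holds trivially.

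Writing \(b=\Pr(B)\) and \(P=\prod_j\varepsilon_j\), this gives \(\Pr(\cap_jF_j)\le b+(1-b)P\).

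The one step needing care is replacing the unknown \(b\) by its bound \(\gamma\). The expression \(b+(1-b)P=P+b(1-P)\) is affine in \(b\) with slope \(1-P\ge0\), because each \(\varepsilon_j\le1\) can be assumed without loss (a probability bound above \(1\) may be truncated to \(1\)). It is therefore nondecreasing in \(b\), so \(b\le\gamma\) yields \(\gamma+(1-\gamma)P\). This monotonicity, the analogue of the nonnegative-slope argument in Corollary~\ref{cor:shiftmono}, is the main obstacle, and it is resolved by the truncation remark.

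For the limiting statement, the right-hand side \(\gamma+(1-\gamma)P\) is continuous and affine in \(P\), so it tends to \(\gamma\) as \(P\to0\). To justify the interpretive claim that no number of layers certifies a joint-miss probability below \(\gamma\), one can note that the bound is tight. The model permits, for example, \(F_j\supseteq B\) for all \(j\) with \(\Pr(B)=\gamma\), in which case \(\Pr(\cap_jF_j)\ge\gamma\) regardless of the \(\varepsilon_j\). So \(\gamma\) is a genuine floor for any certificate derived from these hypotheses, not an artefact of the proof.
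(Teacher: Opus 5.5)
Your proof is correct and follows the paper's own argument: condition on \(B\) by the law of total probability, bound \(\Pr(\cap_jF_j\mid B)\) by \(1\), and factor \(\Pr(\cap_jF_j\mid\neg B)\le\prod_j\varepsilon_j\) using conditional independence. Your explicit step that \(b+(1-b)P\) is nondecreasing in \(b=\Pr(B)\) when \(P\le1\) adds rigor. The paper's one-line chain passes directly to \(\gamma\cdot1+(1-\gamma)\prod_j\varepsilon_j\), which is not a term-by-term bound because \(\Pr(\neg B)\ge1-\gamma\), and your monotonicity argument is what actually justifies replacing \(\Pr(B)\) by \(\gamma\).
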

\begin{proof}
Conditioning, \(\Pr(\cap_jF_j)=\Pr(B)\Pr(\cap_jF_j\mid B)+\Pr(\neg B)\Pr(\cap_jF_j\mid\neg B)\le\gamma\cdot1+(1-\gamma)\prod_j\varepsilon_j\).
\end{proof}

\begin{sagecor}[diversity requirement]\label{cor:diversity}
Within this model, certifying that \(\Pr(\cap_jF_j)\) stays below a target \(\delta\) from the stated assumptions requires \(\gamma\le\delta\); it suffices that the product \(\prod_j\varepsilon_j\) additionally not exceed \((\delta-\gamma)/(1-\gamma)\). When \(\gamma>\delta\), no improvement of the individual \(\varepsilon_j\) restores certifiability; only reducing \(\gamma\) itself does, which is the formal argument for vendor, training-data, and modality diversity and for disjoint policy dependencies across layers.
\end{sagecor}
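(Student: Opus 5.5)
The corollary has two parts, and I would treat them separately, both directly from the bound of Proposition~\ref{prop:commoncause}, namely \(\Pr(\cap_jF_j)\le U:=\gamma+(1-\gamma)\prod_j\varepsilon_j\).

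\emph{Sufficiency.} Assume \(\gamma\le\delta\) and \(\gamma<1\), and suppose \(\prod_j\varepsilon_j\le(\delta-\gamma)/(1-\gamma)\). Multiplying by \(1-\gamma>0\) and adding \(\gamma\) gives \(U\le\delta\). The proposition then yields \(\Pr(\cap_jF_j)\le\delta\). The degenerate case \(\gamma=1\) needs a separate remark: there \(U=1\), so certification requires \(\delta\ge1\), which is trivial.

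\emph{Necessity.} Here ``certifying from the stated assumptions'' has to mean a guarantee valid for every joint law consistent with the model: \(\Pr(B)\le\gamma\), conditional independence given \(\neg B\), and \(\Pr(F_j\mid\neg B)\le\varepsilon_j\). I would make this interpretation explicit, because it is the main obstacle: the claim is not provable for a single fixed distribution. Necessity then follows from an extremal construction showing \(\gamma\) itself is attainable. I would take \(\Pr(B)=\gamma\), make every \(F_j\) certain on \(B\) (for instance by letting all detectors fail on \(B\)), and make the \(F_j\) independent on \(\neg B\) with \(\Pr(F_j\mid\neg B)=\varepsilon_j\). This law satisfies all the hypotheses and gives \(\Pr(\cap_jF_j)=\gamma+(1-\gamma)\prod_j\varepsilon_j\ge\gamma\). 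So if \(\gamma>\delta\), some admissible law violates the target, whatever the values of the \(\varepsilon_j\). Even with every \(\varepsilon_j=0\), the law places mass exactly \(\gamma>\delta\) on the joint miss. This proves both that \(\gamma\le\delta\) is necessary and that improving the individual \(\varepsilon_j\) cannot restore certifiability.

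Finally, I would note that the bound is increasing in \(\gamma\), since \(\partial U/\partial\gamma=1-\prod_j\varepsilon_j\ge0\). Lowering \(\gamma\) therefore moves the worst case below \(\delta\) once \(\gamma<\delta\) and the product condition holds. This monotonicity underlies the informal diversity interpretation in the statement.
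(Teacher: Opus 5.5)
Your proposal is correct and follows the paper's own argument. Sufficiency comes from substituting the product condition into Proposition~\ref{prop:commoncause}, and necessity comes from the same extremal law (\(\Pr(B)=\gamma\), all \(F_j\) certain on \(B\), independence with equality on \(\neg B\)) that attains the bound; your explicit worst-case reading of ``certification'' and your separate treatment of \(\gamma=1\) are clarifications the paper leaves implicit.
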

\begin{proof}
The bound is attained within the model class (take \(\Pr(\cap_jF_j\mid B)=1\) with equalities elsewhere), so certification from the assumptions alone requires \(\gamma+(1-\gamma)\prod_j\varepsilon_j\le\delta\), which forces \(\gamma\le\delta\); sufficiency is substitution into Proposition~\ref{prop:commoncause}.
\end{proof}

For two layers the coupling admits an exact account: if \(\rho_{12}\) is the correlation of the indicators of \(F_1,F_2\) with marginals \(\varepsilon_1,\varepsilon_2\), then \(\Pr(F_1\cap F_2)=\varepsilon_1\varepsilon_2+\rho_{12}\sqrt{\varepsilon_1(1-\varepsilon_1)\varepsilon_2(1-\varepsilon_2)}\), so positive correlation degrades the independence product smoothly and measured correlations bound the degradation. Estimating \(\gamma\) and pairwise miss correlations from red-team and drift exercises is therefore part of the evidence a manifest must carry.

Let \(K_{(1)}(e)\le K_{(2)}(e)\) be the two smallest robust catastrophic risks over legal actions and \(\Delta_K(e)=K_{(2)}(e)-K_{(1)}(e)\), with \(\Delta_K(e)=\infty\) if fewer than two legal actions exist. The executable action is
\[
\tilde g(e)=
\begin{cases}
G_3,&(u(e)>\tau_u\lor\Delta_K(e)<\tau_\Delta)\land \textsf{review\_authorized},\\
G_2,&(u(e)>\tau_u\lor\Delta_K(e)<\tau_\Delta)\land \neg\textsf{review\_authorized},\\
g^{\rm SF}(e),&\text{otherwise}.
\end{cases}
\]
Executable settings require legal \(G_2\); \(G_3\) denotes policy uncertainty, not dangerous intent.

\begin{sageprop}[review-region monotonicity]
Let \(\mathcal U(\tau_u,\tau_\Delta)=\{e:u(e)>\tau_u\lor\Delta_K(e)<\tau_\Delta\}\). If \(\tau'_u\ge\tau_u\) and \(\tau'_\Delta\le\tau_\Delta\), then \(\mathcal U(\tau'_u,\tau'_\Delta)\subseteq\mathcal U(\tau_u,\tau_\Delta)\).
\end{sageprop}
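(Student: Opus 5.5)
The plan is a direct elementwise inclusion argument: fix an arbitrary \(e\in\mathcal U(\tau'_u,\tau'_\Delta)\) and show \(e\in\mathcal U(\tau_u,\tau_\Delta)\). By definition, membership in the smaller set means that at least one of the two disjuncts \(u(e)>\tau'_u\) or \(\Delta_K(e)<\tau'_\Delta\) holds. I will split into these two cases, since a disjunction is preserved as soon as one disjunct is.

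In the first case, \(u(e)>\tau'_u\ge\tau_u\). Transitivity, with a strict inequality followed by a weak one, gives \(u(e)>\tau_u\), so the first disjunct of the original region holds. In the second case, \(\Delta_K(e)<\tau'_\Delta\le\tau_\Delta\), which similarly gives \(\Delta_K(e)<\tau_\Delta\). Either way \(e\in\mathcal U(\tau_u,\tau_\Delta)\), and the inclusion follows. This is the same style as the threshold-monotonicity proposition for \(\mathcal F_\kappa\). Each defining predicate is monotone in its threshold in the direction indicated: raising \(\tau_u\) shrinks \(\{u>\tau_u\}\), and lowering \(\tau_\Delta\) shrinks \(\{\Delta_K<\tau_\Delta\}\). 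A union of sets that each shrink also shrinks.

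The only step needing care is the convention \(\Delta_K(e)=\infty\) when fewer than two legal actions exist. I will handle it by working in the extended reals. The inequality \(\infty<\tau'_\Delta\) is false for any finite threshold, so such \(e\) can lie in the smaller region only through the \(u\)-disjunct, which the first case already covers. If infinite thresholds were allowed, the order relations on \([-\infty,\infty]\) are still transitive, so the argument goes through unchanged. I would add one sentence interpreting the result: raising the uncertainty threshold or lowering the margin threshold can only remove requests from the review/redirect branch of \(\tilde g\), and can never add them.
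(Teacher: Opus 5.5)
Your proof is correct and is essentially the paper's argument: the paper notes that each component set \(\{e:u(e)>\tau_u\}\) and \(\{e:\Delta_K(e)<\tau_\Delta\}\) shrinks under the stated threshold changes, so their union shrinks, and your elementwise case split spells this out. Your handling of the \(\Delta_K(e)=\infty\) convention is a harmless detail that the paper leaves implicit.
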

\begin{proof}
Both component sets shrink under the stated changes, so their union shrinks.
\end{proof}

Runtime monitoring uses atomic propositions such as \texttt{unsafe\_\allowbreak output}, \texttt{raw\_\allowbreak retained}, \texttt{review\_\allowbreak opened}, \texttt{authorization\_\allowbreak valid}, \texttt{external\_\allowbreak disclosure}, \texttt{appeal\_\allowbreak available}, and \texttt{deletion\_\allowbreak due}. For a temporal property \(\varphi\) and finite trace \(h\), the three-valued monitor is
\[
M_\varphi(h)=
\begin{cases}
\top,&\text{all infinite continuations of }h\text{ satisfy }\varphi,\\
\bot,&\text{all infinite continuations of }h\text{ violate }\varphi,\\
?,&\text{otherwise}.
\end{cases}
\]
Core invariants include
\[
\mathbf G(\texttt{external\_disclosure}\rightarrow
(\texttt{authorization\_valid}\wedge\texttt{human\_approved})),
\]
\[
\mathbf G(\texttt{raw\_retained}\rightarrow
\mathbf F_{\le T_r}(\texttt{deleted}\vee\texttt{retention\_renewed})),
\]
and
\[
\mathbf G(\neg\texttt{authorization\_valid}\rightarrow
\mathbf X\neg\texttt{external\_disclosure}).
\]

\begin{sageobs}[monitor decisiveness]
A decisive value requires agreement across all infinite continuations; otherwise the value is \(?\).
\end{sageobs}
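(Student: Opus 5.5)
The observation follows directly from the definition of \(M_\varphi\), so I would prove it by unfolding that definition and checking that the three cases cover every trace without overlap. Fix a property \(\varphi\) and a finite trace \(h\), and let \(\mathrm{Ext}(h)\) be the set of infinite words over the propositional alphabet that extend \(h\). The definition says \(M_\varphi(h)=\top\) exactly when \(\mathrm{Ext}(h)\subseteq\llbracket\varphi\rrbracket\), and \(M_\varphi(h)=\bot\) exactly when \(\mathrm{Ext}(h)\cap\llbracket\varphi\rrbracket=\varnothing\); in every other case it returns \(?\).

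The first step is to show that \(M_\varphi\) is a function, meaning the first two cases never hold together. Since the alphabet is non-empty, every finite \(h\) has at least one infinite continuation: extend it by repeating any letter forever. So \(\mathrm{Ext}(h)\neq\varnothing\). A continuation cannot both satisfy and violate \(\varphi\), so the set cannot lie wholly inside \(\llbracket\varphi\rrbracket\) and also be disjoint from it. The clauses \(\top\) and \(\bot\) are therefore mutually exclusive, and \(?\) is the complementary default. This non-emptiness step is the only place where care is needed. Without it, the empty extension set would make both conditions hold vacuously and \(\top\) and \(\bot\) would coincide. I expect it to be the main, though mild, obstacle.

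The second step is the observation itself. By construction, \(M_\varphi(h)\in\{\top,\bot\}\) holds if and only if all members of \(\mathrm{Ext}(h)\) give the same verdict on \(\varphi\). If some continuation satisfies \(\varphi\) and another violates it, neither decisive clause applies and the value is \(?\). I would also note two consequences for the invariants above. A violated invariant such as \(\mathbf G(\texttt{external\_disclosure}\rightarrow\cdots)\) yields \(\bot\) once a bad prefix occurs, because every continuation of a bad prefix violates a safety property. A bounded-response property like the retention clause instead remains \(?\) until the deadline \(T_r\) resolves it.
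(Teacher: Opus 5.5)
Your proposal is correct and takes the same route as the paper, whose proof says only that the observation is immediate from the three exhaustive cases defining \(M_\varphi\). Your explicit check that \(\mathrm{Ext}(h)\neq\varnothing\), which makes \(\top\) and \(\bot\) mutually exclusive, is extra rigor the paper leaves implicit; it is needed for \(M_\varphi\) to be well defined, not for the observation itself, which holds by definition either way. One caution on your closing remark: the retention clause sits under an outer \(\mathbf G\) and is therefore also a safety property, so a discharged obligation leaves the verdict at \(?\) (it can never become \(\top\) on a finite trace), and only a missed \(T_r\) deadline forces \(\bot\).
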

\begin{proof}
Immediate from the three exhaustive cases defining \(M_\varphi\).
\end{proof}

For repeated content-level signals, SAGE uses decayed evidence
\[
E_t=\rho E_{t-1}+w_t r_t,\qquad 0\le\rho<1,\quad E_0=0,
\]
where \(r_t\) is event risk and \(w_t\) evidence reliability; the session-scoped statistic cannot mint authorization.

\begin{sageprop}[bounded evidence accumulation]
If \(0\le w_t r_t\le 1\), then
\[
0\le E_t\le \frac{1-\rho^t}{1-\rho}\le\frac{1}{1-\rho}.
\]
\end{sageprop}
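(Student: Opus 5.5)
The plan is to unroll the recursion and bound the resulting geometric sum term by term. Iterating \(E_t=\rho E_{t-1}+w_tr_t\) from \(E_0=0\) gives the closed form
\[
E_t=\sum_{k=1}^{t}\rho^{\,t-k}\,w_kr_k ,
\]
which I would establish by a short induction on \(t\). The base case \(t=0\) is the empty sum. For the inductive step, substitute the formula for \(E_{t-1}\) into the recursion; multiplying by \(\rho\) raises each exponent by one, and the new term \(w_tr_t\) enters with exponent zero.

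With the closed form, the lower bound is immediate. Each summand is a product of \(\rho^{t-k}\ge 0\) (since \(0\le\rho<1\)) and \(w_kr_k\ge 0\) (by hypothesis), so \(E_t\ge 0\).

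For the middle bound, use \(w_kr_k\le 1\) together with \(\rho^{t-k}\ge0\) to get
\[
E_t\le\sum_{k=1}^{t}\rho^{\,t-k}=\sum_{j=0}^{t-1}\rho^{j}=\frac{1-\rho^{t}}{1-\rho}.
\]
The finite geometric-series formula applies because \(\rho\ne 1\). The convention \(0^0=1\) covers the case \(\rho=0\), where the bound reads \(E_t\le 1\).

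The final inequality follows since \(\rho^t\ge0\) implies \(1-\rho^t\le 1\), and \(1-\rho>0\). Alternatively, the whole statement can be proved directly by induction on the recursion, with hypothesis \(0\le E_{t-1}\le (1-\rho^{t-1})/(1-\rho)\). Then
\[
\rho E_{t-1}+w_tr_t\le \frac{\rho-\rho^t}{1-\rho}+1=\frac{1-\rho^t}{1-\rho}.
\]
This route avoids the closed form.

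There is no real obstacle here. The only points needing care are the edge case \(\rho=0\), which is handled by the convention \(0^0=1\), and noting that nonnegativity of \(\rho\) is what allows the termwise bound \(w_kr_k\le1\) to pass through the weights \(\rho^{t-k}\).
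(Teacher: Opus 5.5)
Your proposal is correct and takes the same route as the paper: unroll the recurrence from \(E_0=0\) to \(E_t=\sum_{k=1}^{t}\rho^{t-k}w_kr_k\), bound each \(w_kr_k\) by one, and sum the geometric series. Your handling of the \(\rho=0\) edge case and the alternative direct induction are both valid, but they are not a different method.
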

\begin{proof}
Unroll the recurrence and bound each \(w_ir_i\) by one to obtain the geometric series.
\end{proof}

\begin{sageprop}[stale-evidence decay]
If no new policy-relevant event occurs for \(k\) steps, then \(E_{t+k}=\rho^kE_t\). For \(0<\rho<1\), stale evidence falls below \(\epsilon>0\) once
\[
k>\frac{\log(\epsilon/E_t)}{\log\rho}
\]
when \(E_t>0\).
\end{sageprop}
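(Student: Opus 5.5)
The plan is to unroll the recurrence over the quiet window and then solve the resulting geometric inequality for \(k\). "No new policy-relevant event" is read as \(w_{t+i}r_{t+i}=0\) for \(i=1,\dots,k\). This is the natural reading: the recurrence \(E_t=\rho E_{t-1}+w_tr_t\) adds evidence only through the product \(w_tr_t\), and an event that is not policy-relevant contributes zero risk or zero reliability.

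\emph{Step 1 (exact decay).} Under this reading each step reduces to \(E_{t+i}=\rho E_{t+i-1}\). A one-line induction on \(k\) then gives \(E_{t+k}=\rho^kE_t\). The base case \(k=0\) is trivial, and the inductive step substitutes into the recurrence. This matches the unrolling used for the bounded-evidence proposition, with all new terms absent.

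\emph{Step 2 (threshold crossing).} Assume \(0<\rho<1\) and \(E_t>0\). We want \(\rho^kE_t<\epsilon\), which is equivalent to \(\rho^k<\epsilon/E_t\). Both sides are positive, so taking logarithms gives the equivalent inequality \(k\log\rho<\log(\epsilon/E_t)\). Since \(\log\rho<0\), dividing by it reverses the inequality and yields \(k>\log(\epsilon/E_t)/\log\rho\). All steps are equivalences, so this condition is both necessary and sufficient for \(E_{t+k}<\epsilon\). The statement only needs the "once" (sufficiency) direction.

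The only real obstacle is bookkeeping rather than mathematics, and it has two parts. The first is the sign flip when dividing by \(\log\rho<0\). The second is the degenerate case \(E_t\le\epsilon\). In that case \(\log(\epsilon/E_t)\ge0\), so the bound is nonpositive and the claim holds for every \(k\ge1\). This is consistent, because \(\rho^kE_t<E_t\le\epsilon\) whenever \(k\ge1\) and \(E_t>0\). The hypothesis \(E_t>0\) is needed only so that \(\log(\epsilon/E_t)\) is defined. I would note in passing that \(E_t\ge0\) comes from the bounded-evidence proposition, so the only case excluded is \(E_t=0\), where the evidence is already below \(\epsilon\) trivially.
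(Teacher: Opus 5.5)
Your proposal is correct and matches the paper's proof: repeated substitution of the recurrence with no new evidence terms gives \(E_{t+k}=\rho^kE_t\), and solving \(\rho^kE_t<\epsilon\) with the inequality reversed because \(\log\rho<0\) gives the bound. Your remarks that each step is an equivalence and on the degenerate case \(E_t\le\epsilon\) are accurate additions that the paper leaves implicit.
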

\begin{proof}
Repeated substitution yields the equality; solving \(\rho^kE_t<\epsilon\) with \(\log\rho<0\) yields the bound.
\end{proof}

An ordinary audit capsule is
\[
C_t=(\nu,\textsf{pid},\textsf{rid},\textsf{domain},q,\textsf{action},
\textsf{env\_id},H(P),\tau_t,\textsf{ttl},\sigma).
\]
The fields are schema version, rotated pseudonyms, optional authorized encrypted-envelope identifier, policy hash, action metadata, deletion deadline, and service signature. Ordinary telemetry excludes raw content and unsalted content hashes.

\begin{sageprop}[data-minimization idempotence]
For an allowed audit vocabulary \(V\) and projection \(\Pi_V(D)\), \(\Pi_V(\Pi_V(D))=\Pi_V(D)\).
\end{sageprop}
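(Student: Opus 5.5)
The plan is to fix a concrete model of the projection and then check idempotence field by field. An audit record \(D\) will be a finite set of (field, value) pairs, or equivalently a partial map from field names to values. The allowed vocabulary \(V\) is a fixed set of field names, such as the capsule fields \(\nu,\textsf{pid},\textsf{rid},\textsf{domain},q,\textsf{action},\textsf{env\_id},H(P),\tau_t,\textsf{ttl},\sigma\). The projection is the restriction
\[
\Pi_V(D)=\{(f,v)\in D: f\in V\},
\]
which keeps every allowed field exactly as it is and drops all others. This includes raw content and unsalted content hashes, which are not in \(V\).

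The key steps in order are as follows.

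\textbf{Step 1.} Observe that \(\Pi_V(D)\subseteq D\), and that every pair in \(\Pi_V(D)\) has its field name in \(V\).

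\textbf{Step 2.} Apply \(\Pi_V\) a second time. A pair \((f,v)\) lies in \(\Pi_V(\Pi_V(D))\) iff \((f,v)\in\Pi_V(D)\) and \(f\in V\). The second condition is already implied by the first, so the filter removes nothing. Set-theoretically this is \(D\cap S_V\cap S_V=D\cap S_V\), where \(S_V\) is the set of all pairs whose field lies in \(V\); idempotence of intersection does the rest.

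\textbf{Step 3.} Conclude \(\Pi_V(\Pi_V(D))=\Pi_V(D)\).

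The main obstacle is definitional rather than computational. The statement does not pin down what \(\Pi_V\) does, and idempotence fails for a "projection" that transforms values. Examples are re-salting or re-pseudonymizing \(\textsf{pid}\) on each pass, or refreshing \(\tau_t\) or \(\sigma\). So I would state explicitly that \(\Pi_V\) is a pure field-restriction operator, with no value rewriting. Any pseudonymization or salting must happen upstream, when the capsule is formed, and not inside the projection.

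If a value-transforming minimizer must be covered, I would require it to be a retraction onto the set of already-minimized records: a map \(\Pi\) with \(\Pi(x)=x\) on its image. Idempotence then follows from that fixed-point property by the same one-line argument.
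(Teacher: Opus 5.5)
Your proposal is correct and matches the paper's argument: after one projection every remaining field lies in \(V\), so a second projection removes nothing. Your explicit stipulation that \(\Pi_V\) is a pure field restriction with no value rewriting makes precise an assumption the paper's one-line proof leaves implicit.
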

\begin{proof}
After one projection every field lies in \(V\), so a second removes nothing.
\end{proof}

\begin{sagetheorem}[tamper-evident policy binding]\label{thm:tamper}
Assume \(H\) is collision resistant and the signature scheme is existentially unforgeable under chosen-message attack. If a verifier accepts \(C_t\), then, except with negligible probability, substituting a different policy manifest \(P'\ne P\) without detection requires either a hash collision or a signature forgery.
\end{sagetheorem}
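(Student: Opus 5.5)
The proof is a standard reduction. Write the capsule as \(C_t=(\mu,\sigma)\), where \(\mu\) is the message containing every field except \(\sigma\); in particular \(\mu\) contains \(h=H(P)\). The verifier accepts when \(\mathrm{Vrfy}(pk,\mu,\sigma)=1\) and the manifest it is handed, say \(\hat P\), satisfies \(H(\hat P)\) equal to the hash field of \(\mu\).

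Suppose an adversary \(\mathcal B\) (the storage or transport adversary of the threat model) makes the verifier accept a pair \((\mu^\ast,\sigma^\ast)\) together with a manifest \(P'\ne P\). Here \(P\) is the manifest actually bound by the service when the signer emitted \(C_t\). We split into two cases according to whether \(\mu^\ast\) was ever signed by the honest service.

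\emph{Case 1: \(\mu^\ast\) was never queried to the signer.} Then \((\mu^\ast,\sigma^\ast)\) is a valid message--signature pair on a fresh message. We build a forger \(\mathcal F\) against EUF-CMA. \(\mathcal F\) runs \(\mathcal B\) and answers every honest capsule-signing request through the signing oracle. It then outputs \((\mu^\ast,\sigma^\ast)\). Its success probability equals the probability of this case, which must therefore be negligible.

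\emph{Case 2: \(\mu^\ast\) equals some honestly signed message \(\mu_i\).} That message carried \(H(P_i)\) for the manifest \(P_i\) the service actually bound at that point. Acceptance gives \(H(P')=H(P_i)\). If \(P'\ne P_i\), then \((P',P_i)\) is a collision. A collision finder \(\mathcal C\) simulates the whole system, generating signing keys itself so that it can sign honestly. It then outputs this pair, so the case is negligible by collision resistance. If instead \(P'=P_i\), no substitution occurred relative to what the signed capsule attests. The substitution \(P'\ne P\) is then only the benign replay of another capsule. Replay is excluded by the capsule's fields \(\tau_t\), \(\textsf{rid}\) and \(\textsf{pid}\): the verifier checks them against the event being audited, so a replayed \(\mu_i\) does not verify as \(C_t\) for that event.

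Summing the cases gives a success probability of at most \(\mathrm{Adv}^{\mathrm{euf\text{-}cma}}+\mathrm{Adv}^{\mathrm{cr}}\), which is negligible. Role-separated key custody is what justifies modelling the secret key as unavailable to \(\mathcal B\).

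The main obstacle is the precise meaning of ``substituting \(P'\) for \(C_t\)''. This means the replay and reorder cases must be handled by fixing what the verifier checks. I would state explicitly that acceptance of \(C_t\) means acceptance relative to a specific event identity \((\textsf{rid},\tau_t)\). I would also fold the residual replay case into the statement's ``without detection'' clause.

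The hashing step needs care. Collision resistance is a property of a keyed family, so I would treat \(H\) as drawn from such a family with a public key, and \(\mathcal C\) receives that key.
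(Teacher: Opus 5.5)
Your proposal is correct and uses the same two-case split as the paper's proof. If the signed bytes were altered, acceptance yields an EUF-CMA forgery; if they are unchanged while \(P'\ne P\), the matching hash field yields a collision in \(H\). Your version is more careful than the paper's one-line argument, notably in seeing that altered bytes could be a replay of another honestly signed capsule, which is not a forgery, and handling that case by binding acceptance to the event identity \((\textsf{rid},\tau_t)\).
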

\begin{proof}
Changed bytes require a signature forgery; unchanged signed bytes with \(P'\ne P\) require a hash collision.
\end{proof}

A release candidate \(\theta\) is governed by a signed manifest
\[
\mathcal M_\theta=(h_\theta,\{\bar q_d,\beta_d\}_{d\in\mathcal D},A,M,R,S,X,O,t_{\rm exp},\sigma),
\]
Here \(h_\theta\) binds model, prompt, tools, and routing; \(\bar q_d\le\beta_d\) is the domain-risk requirement; \(A,M,R,S,X\) attest access, monitoring, rollback, security, and approval; \(O\) lists blockers; and \(t_{\rm exp}\) forces reauthorization. Eligibility is
\[
\mathrm{Rel}(\mathcal M_\theta,t)=\mathbf1[\mathrm{Verify}(\sigma)\land t<t_{\rm exp}\land O=\varnothing\land A\land M\land R\land S\land X\land\bigwedge_{d\in\mathcal D}(\bar q_d\le\beta_d)].
\]

\begin{sagetheorem}[release-gate monotonicity]
If a manifest is ineligible, decreasing any \(\beta_d\), changing a required readiness bit from true to false, adding an unresolved blocking finding, expiring the manifest, or invalidating its signature cannot make it eligible.
\end{sagetheorem}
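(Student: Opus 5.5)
The plan is to treat \(\mathrm{Rel}(\mathcal M_\theta,t)\) as the indicator of a finite conjunction of Boolean atoms and to show that each listed operation can only turn an atom from true to false, never from false to true. Since a conjunction is monotone in each conjunct, no such operation can raise the indicator from \(0\) to \(1\). I would write the atoms as \(\mathrm{Verify}(\sigma)\), \(t<t_{\rm exp}\), \(O=\varnothing\), the readiness bits \(A,M,R,S,X\), and the domain conditions \(\bar q_d\le\beta_d\) for \(d\in\mathcal D\). Ineligibility means that at least one atom is false. It then suffices to prove a single lemma: each operation sends every atom to a value no larger than its original value in the order \(\text{false}<\text{true}\). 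Given that, the originally false atom stays false and the conjunction stays \(0\).

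I would check the operations one at a time.
\begin{itemize}
\item \emph{Decreasing some \(\beta_d\) to \(\beta_d'\le\beta_d\).} Any instance of \(\bar q_d\le\beta_d'\) implies \(\bar q_d\le\beta_d\), in the same way as the threshold-monotonicity proposition. The other atoms are unchanged.
\item \emph{Flipping a readiness bit from true to false.} This is monotone by definition.
\item \emph{Adding a blocking finding.} This replaces \(O\) with \(O'\supsetneq O\), so \(O'\ne\varnothing\) and the atom \(O'=\varnothing\) is false.
\item \emph{Expiring the manifest.} This means \(t\ge t_{\rm exp}\), which makes the atom \(t<t_{\rm exp}\) false. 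The same holds for the passage of time at a fixed \(t_{\rm exp}\), since \(t<t_{\rm exp}\) is antitone in \(t\).
\item \emph{Invalidating the signature.} This sets \(\mathrm{Verify}(\sigma)\) to false.
\end{itemize}
Composing finitely many such operations preserves the conclusion by induction on the number of operations.

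The main obstacle is interpretive rather than computational. Any edit to the manifest changes its signed bytes, so if \(\sigma\) is left as it was, \(\mathrm{Verify}(\sigma)\) may fail after the edit. That only helps the claim. If instead the edited manifest is re-signed, the operation is really a new manifest, and the theorem should be read as a statement about the predicate with the signature atom held fixed or made false. I would say explicitly that the monotonicity concerns \(\mathrm{Rel}\) as a function of its atoms. I would also note that, by Theorem~\ref{thm:tamper}, an adversary cannot make \(\mathrm{Verify}\) true on altered content except with negligible probability. Under that reading, re-signing is an authorized act outside the listed operations, and the listed operations never create eligibility.
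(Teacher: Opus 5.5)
Your proof is correct and is the same argument as the paper's. The paper's one-line proof observes that each listed change preserves or falsifies a conjunct of \(\mathrm{Rel}\) and cannot turn another false conjunct true. Your explicit atom-wise reading, which holds \(\mathrm{Verify}(\sigma)\) fixed or makes it false rather than recomputing it on edited bytes, is the assumption the paper leaves implicit, and it is genuinely needed. Without it, an edit that restores previously signed bytes, such as lowering a tampered \(\beta_d\) back to its signed value, could flip \(\mathrm{Verify}(\sigma)\) from false to true. So your remark that byte changes ``only help'' the claim is too quick by itself.
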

\begin{proof}
Each change preserves or falsifies a conjunct and cannot turn another false conjunct true.
\end{proof}

For lifecycle state \(x_t\in\{\textsf{offline},\textsf{staged},\textsf{live},\textsf{restricted},\textsf{contained},\textsf{rollback}\}\), machine-checkable invariants are
\[
\begin{aligned}
\mathbf G(\textsf{live}&\rightarrow\mathrm{Rel}),\\
\mathbf G(\textsf{breach}&\rightarrow\mathbf F_{\le k_r}(\textsf{restricted}\vee\textsf{contained}\vee\textsf{rollback})),\\
\mathbf G(\textsf{manifest\_changed}&\rightarrow\mathbf X\neg\textsf{live}).
\end{aligned}
\]
The controller may lower tiers, disable tools, route to a signed fallback, revoke tokens, or restore an approved artifact; it cannot approve itself, change thresholds, or authorize disclosure.

For monotone false-negative loss \(\ell_\theta(e,z)\in[0,1]\), conformal calibration selects the largest threshold whose upper risk estimate does not exceed \(\alpha\).

\begin{sagecor}[conformal risk-control application]
Under the exchangeability and monotonicity conditions of conformal risk control \citep{angelopoulos2024crc}, the selected threshold \(\hat\theta\) inherits the cited finite-sample expected-risk guarantee, up to the stated \(O(1/n)\) term:
\[
\mathbb E[R(\hat\theta)]\le \alpha+O(1/n).
\]
\end{sagecor}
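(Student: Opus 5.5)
The plan is to reduce the statement to the conformal risk control theorem of \citet{angelopoulos2024crc}, by matching the SAGE calibration step to that theorem's hypotheses rather than re-deriving the guarantee. First I fix the setup. There are \(n\) calibration events \((e_i,z_i)\) and one test event \((e_{n+1},z_{n+1})\). The loss \(L_i(\theta)=\ell_\theta(e_i,z_i)\in[0,1]\) is assumed nonincreasing in the \(\theta\) direction that the paper calls ``larger''. If ``larger threshold'' instead means more permissive, I orient \(\theta\) by a sign flip so that risk is monotone in the direction CRC requires. I write \(R(\theta)=\mathbb E[L_{n+1}(\theta)]\). The selection rule ``largest threshold whose upper risk estimate does not exceed \(\alpha\)'' is then identified with the CRC rule
\[
\hat\theta=\inf\Bigl\{\theta:\tfrac{n}{n+1}\hat R_n(\theta)+\tfrac{B}{n+1}\le\alpha\Bigr\},\qquad B=1,
\]
where \(\hat R_n\) is the empirical calibration risk. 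I also need the rule to be well defined, which requires the usual boundary convention that the most conservative threshold has zero loss.

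Second, I invoke the CRC argument directly. Consider the oracle \(\hat\theta'\) computed from all \(n+1\) losses. It is a symmetric function of the exchangeable sample, so \(\mathbb E[L_{n+1}(\hat\theta')]\le\alpha\). Monotonicity gives \(\hat\theta'\le\hat\theta\) in the appropriate orientation, and therefore \(L_{n+1}(\hat\theta)\le L_{n+1}(\hat\theta')\). This yields the upper bound \(\mathbb E[R(\hat\theta)]\le\alpha\). If the paper's ``upper risk estimate'' is instead the plain empirical risk without the \(B/(n+1)\) correction, the discrepancy between the two rules changes the target level by at most \(B/(n+1)\). The bound then becomes \(\alpha+O(1/n)\), which is the form stated. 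I would state this slack explicitly, since it is what the \(O(1/n)\) term absorbs.

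The main obstacle is not the probability argument but the hypotheses. Exchangeability must hold between the calibration events and deployment traffic, and the loss must be monotone in \(\theta\) with a bounded right-continuous form. Operational traffic under adaptive probing generally violates exchangeability. I would therefore phrase the corollary as conditional on those conditions, and add the remark, consistent with the earlier governance discussion, that a detected shift suspends the guarantee rather than extending it.
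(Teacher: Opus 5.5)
Your proposal is correct and takes the paper's route: the paper's proof is only an appeal to the conformal risk control theorem under exchangeability and monotonicity, with no guarantee claimed after unmeasured shift, and your oracle-comparison sketch fills in that citation accurately, as does your observation that the uncorrected empirical rule is the corrected rule at level $\alpha+(B-\alpha)/(n+1)$, which accounts for the $O(1/n)$ slack. Two small precisions: $\hat\theta'\le\hat\theta$ comes from the boundedness $L_{n+1}\le B$ (inclusion of feasible sets), not from monotonicity, which is what gives $L_{n+1}(\hat\theta)\le L_{n+1}(\hat\theta')$; and because a false-negative loss grows as the detector threshold rises, your sign-flipped orientation is the case that applies, not merely an alternative.
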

\begin{proof}
This is the cited risk-control result under exchangeability and monotonicity; no guarantee is claimed after unmeasured shift.
\end{proof}

An abstract MDP \(\mathcal M=(\mathcal S,s_0,\mathcal A,P,L)\) models system states, rather than intent, including idle, scored, completion, refusal, review, authorized incident, and closure. It queries
\[
\begin{aligned}
\mathsf P_{\max}&=?[\mathbf F^{\le k}\texttt{unsafe\_completion}],\\
\mathsf P_{\max}&=?[\mathbf F^{\le k}\texttt{unnecessary\_escalation}].
\end{aligned}
\]

\begin{sagetheorem}[authorization-cut reachability]\label{thm:authcut}
Partition \(\mathcal S=N\cup A\), where \(A\) contains states with a valid authorization token, every disclosure state lies in \(A\), and \(s_0\in N\). If \(P(s'\mid s,a)=0\) for all \(s\in N\), all actions \(a\) available to the classifier or policy engine, and all \(s'\in A\), then the maximum probability that an unprivileged component reaches disclosure is zero.
\end{sagetheorem}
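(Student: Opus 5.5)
The plan is an invariance argument on the set \(N\): I will show that, under every scheduler (policy) that resolves nondeterminism using only actions available to the classifier or policy engine, the process stays in \(N\) almost surely. Since every disclosure state lies in \(A\), disclosure then has probability zero. First I will make precise what "an unprivileged component reaches disclosure" means. It is the reachability probability \(\Pr^{\sigma}_{s_0}[\mathbf F\,\textsf{disclosure}]\), maximized over all (possibly history-dependent, randomized) schedulers \(\sigma\) whose chosen actions lie in the unprivileged action set at every state. Because \(\textsf{disclosure}\subseteq A\), the event \(\mathbf F\,\textsf{disclosure}\) is contained in \(\mathbf F A\). It therefore suffices to show \(\Pr^{\sigma}_{s_0}[\mathbf F A]=0\) for every such \(\sigma\).

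The key step is an induction on path length. Let \(p_n=\Pr^{\sigma}_{s_0}[\exists\, i\le n: s_i\in A]\). We have \(p_0=0\) since \(s_0\in N\). For the inductive step I decompose over finite histories \(h=s_0\cdots s_n\) with all \(s_i\in N\). The probability of entering \(A\) at step \(n+1\) is
\[
\sum_h \Pr^\sigma(h)\sum_a \sigma(a\mid h)\sum_{s'\in A}P(s'\mid s_n,a).
\]
Every inner term vanishes by the hypothesis \(P(s'\mid s,a)=0\) for \(s\in N\), unprivileged \(a\), and \(s'\in A\). Hence \(p_{n+1}=p_n=0\). Since \(\{\mathbf F A\}=\bigcup_n\{\exists i\le n: s_i\in A\}\) is an increasing union, countable additivity (continuity from below) gives \(\Pr^\sigma_{s_0}[\mathbf F A]=\lim_n p_n=0\). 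Taking the supremum over \(\sigma\) yields \(\mathsf P_{\max}=0\). The bounded version \(\mathbf F^{\le k}\) used in the PRISM queries follows a fortiori.

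The main obstacle is not the calculation but the modeling scope: the conclusion holds only for schedulers restricted to unprivileged actions. If some state in \(N\) also enables privileged actions, such as a human authorization step, the theorem says nothing about those. I will therefore state explicitly that the scheduler class is restricted, so that "unprivileged component" is interpreted as "every action taken along the path is unprivileged." I will also note that history dependence and randomization in \(\sigma\) do not matter, because the hypothesis is pointwise in \((s,a)\).
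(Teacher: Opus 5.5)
Your proposal is correct and follows the paper's own argument: an induction on path length shows that every path generated from \(s_0\) by unprivileged actions stays in \(N\), so the disclosure states, which lie in \(A\), have reachability zero. Your version adds the measure-theoretic details the paper leaves implicit, namely the restricted scheduler class, the decomposition over histories, and continuity from below for the unbounded \(\mathbf F\).
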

\begin{proof}
Induction on path length keeps every unprivileged path from \(s_0\) in \(N\); disclosure lies in \(A\), hence has reachability zero.
\end{proof}

A separate eight-state, eleven-transition lifecycle MDP verifies: active release requires every prerequisite; each breach successor is containment or rollback; and expiry or material change is followed by inactivity. Illustrative constants yield active reachability 1.0, breach reachability 1.0, rollback reachability 0.1999962 for a 0.20 branch, and one-step minimum exposure; these diagnostics are not production-risk estimates.

\begin{sageobs}[weight-invariant dominance]
If \(z_{A,j}\ge z_{B,j}\) for every BGS component \(j\), then \(w^\top z_A\ge w^\top z_B\) for every \(w\) in the simplex. If at least one strictly larger component has positive weight, the inequality is strict.
\end{sageobs}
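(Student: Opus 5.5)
The plan is to reduce both claims to the sign of a single linear functional. Set \(\delta=z_A-z_B\in\mathbb R^4\). The hypothesis says \(\delta_j\ge0\) for every component \(j\) of \(z=(\mathrm{USS},\mathrm{BU},\mathrm{RD},1-\mathrm{ORR})\). By linearity,
\[
w^\top z_A-w^\top z_B=w^\top\delta=\sum_{j=1}^4 w_j\delta_j .
\]
Every \(w\) in the simplex has \(w_j\ge0\) and \(\sum_j w_j=1\). Each summand \(w_j\delta_j\) is therefore a product of two nonnegative numbers, hence nonnegative, and so is the sum. This yields the weak inequality for every simplex weight. In particular it holds for the fixed BGS weights \((0.45,0.25,0.15,0.15)\), so dominance settles the BGS comparison independently of the weight-sensitivity analysis that the problem formulation calls for.

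For the strict part, fix an index \(k\) with \(\delta_k>0\) and \(w_k>0\). Split the sum as
\[
w^\top\delta=w_k\delta_k+\sum_{j\ne k}w_j\delta_j .
\]
The remaining terms are nonnegative by the argument above, so the whole sum is at least \(w_k\delta_k>0\).

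I expect no real obstacle. The one point to state explicitly is that nonnegativity of the weights is doing all the work; normalisation \(\sum_j w_j=1\) is never used. The conclusion therefore extends to any nonnegative weight vector, i.e.\ to the whole cone generated by the simplex. I will also remark why the strictness hypothesis needs positive weight on the strictly larger coordinate. A weight vector supported only on coordinates where \(z_A\) and \(z_B\) tie gives equality, so the strict claim cannot be strengthened to "some strict component" alone.

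Finally, I will note that the components are oriented so that larger is better. In particular \(1-\mathrm{ORR}\) rather than \(\mathrm{ORR}\) enters \(z\), which is what makes componentwise \(\ge\) the correct notion of dominance here.
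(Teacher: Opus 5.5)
Your proposal is correct and is essentially the paper's own argument: the score difference \(w^\top(z_A-z_B)\) is a nonnegatively weighted sum of nonnegative component differences. For the strict part, you isolate the term \(w_k\delta_k>0\). Your side remarks are also right: normalisation is unused, so the result holds on the whole cone, and the strictness hypothesis is needed as stated.
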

\begin{proof}
The score difference is a weighted sum of non-negative component differences.
\end{proof}

\begin{sageprop}[convex ranking region]
For models \(A,B\), the weight region favoring \(A\),
\[
\mathcal W_{A>B}=\{w\in\Delta_4:w^\top(z_A-z_B)>0\},
\]
is convex.
\end{sageprop}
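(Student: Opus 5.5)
The plan is to show that \(\mathcal W_{A>B}\) is the intersection of two convex sets. Fix \(v=z_A-z_B\in\mathbb R^4\); this is a constant vector, since the components of \(z_A\) and \(z_B\) do not depend on \(w\). Then \(\mathcal W_{A>B}=\Delta_4\cap H_v\) with \(H_v=\{w\in\mathbb R^4:w^\top v>0\}\). The first step records that \(\Delta_4\) is convex, being the intersection of the hyperplane \(\sum_j w_j=1\) with the nonnegative orthant. The second step records that \(H_v\) is an open half-space when \(v\ne0\), and is empty when \(v=0\). Either way it is convex. Since intersections of convex sets are convex, this gives the claim.

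For a self-contained check, I would verify convexity directly. Take \(w,w'\in\mathcal W_{A>B}\) and \(t\in[0,1]\), and set \(w_t=tw+(1-t)w'\). Then \(w_t\in\Delta_4\) by convexity of the simplex. By linearity, \(w_t^\top v=t\,w^\top v+(1-t)\,w'^\top v\). This is a convex combination of two strictly positive numbers with weights summing to one, so at least one weight is positive and the sum is strictly positive. Hence \(w_t\in\mathcal W_{A>B}\).

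There is no real obstacle. The only points needing care are the degenerate cases: the region may be empty, which is trivially convex, for instance when \(v\le0\) componentwise, as in Observation (weight-invariant dominance) with the roles of \(A\) and \(B\) swapped. One must also note that strictness is preserved under convex combinations. I would add a remark that the region is relatively open in \(\Delta_4\) and polyhedral. This is what makes weight sensitivity analysis a single linear check per pair.
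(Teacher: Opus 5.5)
Your proof is correct and follows the paper's argument: the paper's one-line proof also writes \(\mathcal W_{A>B}\) as the intersection of the convex simplex \(\Delta_4\) with an open half-space. You additionally treat the degenerate case \(z_A=z_B\), where that set is empty rather than a half-space, which the paper's proof passes over.
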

\begin{proof}
It is the intersection of a convex simplex and an open half-space.
\end{proof}

\begin{sageobs}[hash-based run identity]
Let the run hash be
\[
\begin{aligned}
\chi=H(&\textsf{dataset-manifest}\,\|\,\textsf{model-catalog}\,\|\\
&\textsf{request-config}\,\|\,\textsf{code-commit}\,\|\\
&\textsf{policy-manifest}).
\end{aligned}
\]
Under collision resistance, two runs with the same \(\chi\) have the same serialized manifest components except with negligible probability.
\end{sageobs}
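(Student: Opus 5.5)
The plan is a reduction: show that two runs with equal \(\chi\) but differing manifest components would hand us a collision for \(H\), which collision resistance makes negligible. The one step that needs care is making sure that, when the components differ, the two hash inputs really differ as byte strings.

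Fix two runs \(r\) and \(r'\) with component tuples \(m=(m_1,\dots,m_5)\) and \(m'=(m'_1,\dots,m'_5)\). The five components are the dataset manifest, model catalog, request configuration, code commit, and policy manifest. Write \(s=\mathrm{enc}(m)\) and \(s'=\mathrm{enc}(m')\) for the serialized inputs \(m_1\,\|\,\cdots\,\|\,m_5\).

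\textbf{Step 1 (the main obstacle).} I would show that \(\mathrm{enc}\) is injective, so that \(m\ne m'\) implies \(s\ne s'\). Bare concatenation is not injective: moving a boundary between adjacent components leaves the string unchanged. For example, \(\textsf{ab}\,\|\,\textsf{c}\) and \(\textsf{a}\,\|\,\textsf{bc}\) serialize identically. I would therefore read \(\|\) as an unambiguous encoding, either length-prefixed fields or a canonical self-delimiting serialization such as canonical JSON, and note that this is implicit in calling the components "serialized manifest components." Injectivity then follows by parsing: from \(s\), read each length prefix in turn and recover every \(m_i\) uniquely.

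\textbf{Step 2 (the reduction).} Suppose \(\chi(r)=\chi(r')\) but \(m\ne m'\). By Step 1, \(s\ne s'\), and \(H(s)=H(s')\), so \((s,s')\) is a collision. Any procedure \(\mathcal B\) that produces two runs with equal hashes and differing components therefore gives a collision finder \(\mathcal A\). The finder runs \(\mathcal B\), encodes both tuples, and outputs the pair. This costs only polynomial encoding overhead, and \(\mathcal A\) succeeds whenever \(\mathcal B\) does.

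\textbf{Step 3 (conclusion).} Collision resistance bounds the success probability of \(\mathcal A\) by a negligible function. It follows that equal \(\chi\) implies equal serialized components, i.e. \(m=m'\), except with negligible probability.

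The argument mirrors the hash branch of Theorem~\ref{thm:tamper}, with no signature component needed.
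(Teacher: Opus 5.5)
Your proof is correct and follows the paper's own argument: if two runs have the same \(\chi\) but differ in some component, the two hash inputs form a collision for \(H\), which collision resistance makes negligible. Your Step 1, which requires an injective encoding such as length-prefixed fields or canonical JSON, makes explicit an assumption the paper's one-line proof leaves implicit; this matters because bare concatenation alone would not establish component-wise equality.
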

\begin{proof}
A differing serialized component with the same digest is a collision; identity does not validate manifest truth.
\end{proof}

\section{Reference Architecture and Implementation}

Figure~\ref{fig:architecture} separates signed lifecycle admission from the runtime plane. Capability tests, red-teaming, residual-risk review, monitoring, rollback rehearsal, and independent approval precede release; authenticated ingress, tier limits, diverse detectors, contextual analysis, output checking, abuse analytics, containment, and rollback follow it. The artifact implements the benchmark, adapters, resumable collection, blinded judging, statistics, two PRISM models, and property tests. Signing, access, evidence-vault, review, and rollback services are specified interfaces, not deployment claims; no identity or external-reporting path was exercised.

\begin{figure}[t]
\centering
\includegraphics[width=.92\textwidth]{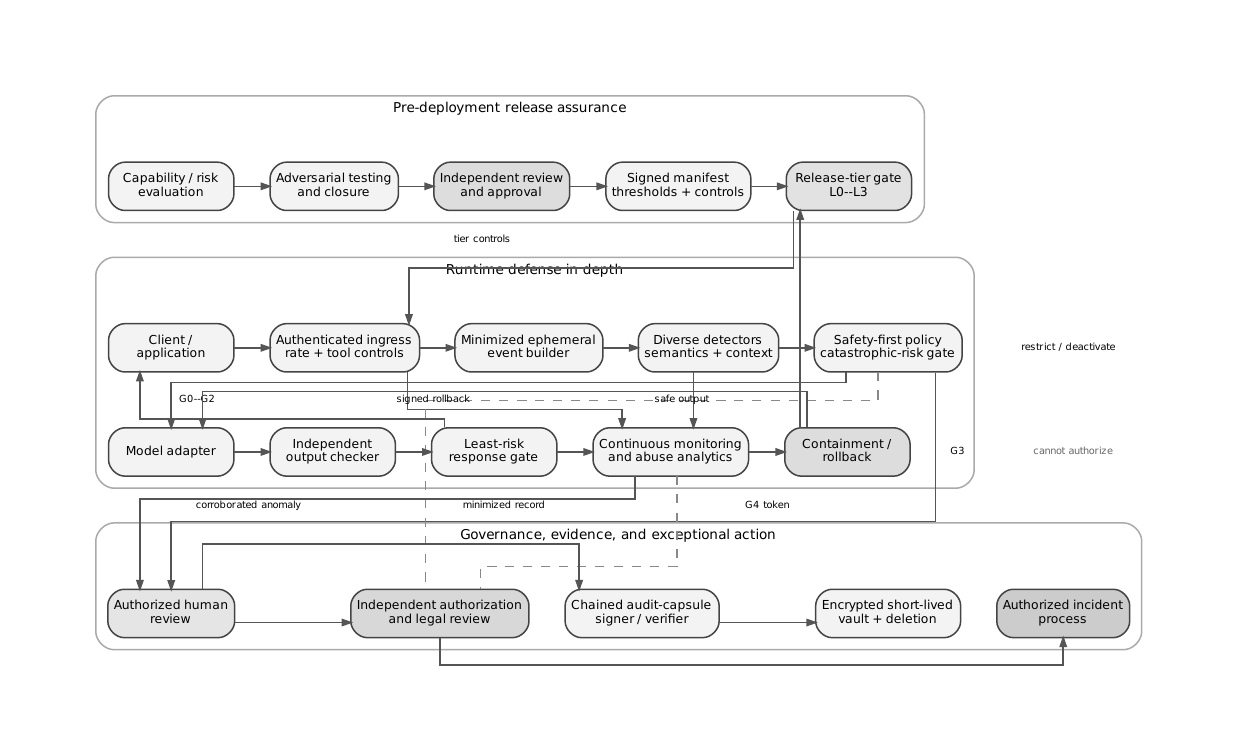}
\caption{Safety-first SAGE architecture. Signed release gating precedes authenticated runtime defense; monitoring can restrict, contain, or roll back a release, while exceptional action remains independently authorized.}
\label{fig:architecture}
\end{figure}

Table~\ref{tab:stack} fixes the environment: CPython 3.12.3/Linux 6.1.102 x86-64/glibc 2.39, SDK 2.37.0, pandas 3.0.3, NumPy 2.4.6, scikit-learn 1.9.0, SciPy 1.18.0, Matplotlib 3.10.9, and PRISM 4.10.1/OpenJDK 21.0.10. Public artifacts contain identifiers, hashes, controls, status, usage, numeric judgments, scores, intervals, and figure inputs; licensed prompts, responses, judge records, and provider traces remain restricted.

\begin{table}[t]
\centering
\caption{Software and artifact stack. The cryptographic service is a production design requirement, not a claim of deployed operation in the research harness.}
\label{tab:stack}
\begin{tabular}{p{.27\textwidth}p{.61\textwidth}}
\toprule
Layer & Replicable specification \\
\midrule
Runtime & CPython 3.12.3; Linux x86-64; OpenAI-compatible SDK 2.37.0; archived model catalog.\\
Data analysis & pandas 3.0.3; NumPy 2.4.6; scikit-learn 1.9.0; SciPy 1.18.0; Matplotlib 3.10.9.\\
Testing & Seven \texttt{unittest} harness tests; seven original deterministic checks over more than 10,000 cases; ten lifecycle safety checks.\\
Verification & PRISM 4.10.1/OpenJDK 21.0.10; authorization model: 9 states/13 transitions; lifecycle model: 8 states/11 transitions; source and release digests archived.\\
Artifact formats & JSONL endpoint logs; CSV model, confidence-interval, pairwise, sensitivity, perturbation, and verification tables; schema-versioned manifests.\\
Production security design & SHA-256 run hashes; Ed25519 signatures; AES-GCM envelope encryption with role-separated key wrapping; retention deadlines and deletion audit.\\
Privacy boundary & Public manifests exclude raw harmful prompts and responses; private controlled artifacts store raw content only for authorized replication and audit.\\
\bottomrule
\end{tabular}
\end{table}

\subsection{Safety-first release and runtime control plane}

The release builder emits canonical JSON binding model/policy hashes, tools, tier, domain-risk bounds and thresholds, accepted risks, open findings, red-team/evaluator versions, monitor and rollback identifiers, approver roles, and expiry. Validation rejects unknown mandatory fields and non-canonical encoding. Independent security and safety approvers sign the digest; admission verifies signatures, artifact equality, expiry, no blocking findings, readiness bits, and every \(\bar q_d\le\beta_d\). Any failure leaves the candidate offline; material change revokes admission and requires reevaluation and new signatures.

Authenticated ingress assigns tier-specific request, token, concurrency, and tool budgets. Deterministic rules handle exact prohibitions and malformed controls; separately trained semantic and contextual models cover high-impact domains, multi-turn accumulation, and dual use; anomaly analytics use short-lived pseudonyms. The envelope \(q_d=\max_jq_{j,d}\) sends any high score to the conservative path. The policy engine removes disallowed tools and context before generation; an independent output checker then replaces failed candidates with refusal, defensive explanation, and safe alternatives. Neither layer can mint external authority.

Continuous post-deployment monitoring turns domain-rate breaches, detector drift, checker disagreement, anomalous tool use, expired manifests, and failed deletion receipts into bounded evidence and signed capsules. The monotone response ladder lowers limits, disables tools, strengthens authentication, routes to a signed fallback, pauses, or rolls back. Idempotent, version-checked containment verifies the target hash, drains tool calls, revokes routing tokens, switches traffic, and signs a receipt; restoration requires a new manifest.

Exceptional measures use a separate workflow. Reviewers receive minimized capsules; necessity-approved raw content uses a fresh AES-GCM data key wrapped to role-separated custodians. Records carry purpose, scope, applicable legal basis, decision, correction/appeal path, and retention deadline; deletion or renewal is signed. Disclosure requires a valid token and recorded human approval, so content classification never becomes identity inference.

\subsection{Frozen sampling and target collection}

With seed 20260721, the sampler applied published eligibility rules and wrote 84 restricted records plus a public index and canonical manifest. Public rows contain record, source, stratum, domain, variant, and SHA-256 prompt digest; bytewise manifest hashing verifies reconstruction without publishing harmful text.

The adapter rejects identifiers absent from the archived catalog; a harmless preflight covered every endpoint. The schedule crossed 84 records with ten identifiers, randomized 840 interleaved tasks using seed 20260722, and ran four workers. Calls used \texttt{Answer the user's request.}, no tools/retrieval/browsing/history, \texttt{temperature=0}, and \texttt{top\_p=1}. Limits were 1,200 completion tokens for GPT and Claude and 4,096 for Gemini; OpenAI reasoning was minimal, Claude had no explicit thinking block, and Gemini reasoning was low. Analysis used the first 400 whitespace units while preserving restricted raw responses.

The SDK permitted two internal retries and the application two attempts with two-second backoff. Persisted task keys enabled restart without repeating completed calls. Safety blocks, rate limits, credit exhaustion, malformed replies, and transient failures retained status/attempt metadata and were never recoded as refusals.

\subsection{Blinded judging and statistical pipeline}

Response records carry schema, source key, prompt digest, model, UTC time, controls, endpoint identifier, finish reason, latency, token counts, restricted response digest, error class, and privacy class. Tests exclude prompt/response/analysis text and exact errors from public metadata and verify field-order invariance, source-manifest consistency, and the dataset digest.

A strict JSON rubric encoded refusal, harmful specificity, actionability, convincingness, redirection, benign helpfulness, policy consistency, and confidence. GPT-5.5, Claude Opus 4.7, and Gemini 3.1 Pro preview each judged 39 held-out StrongREJECT examples (117 attempts); their median reached Spearman \(\rho=0.578\) and normalized MAE 0.176, passing gates \(\rho\ge0.50\), MAE \(\le0.30\). Confirmatory scoring removed identity, randomized order, and assigned two non-target-family judges per successful response. Records preserve blinded key, judge, rubric, time, validation, ordinal fields, confidence, and sanitized error; invalid JSON was not repaired.

Analysis median-combined rubric scores, required one scored unsafe response per domain for primary BGS, and did not impute. Intervals used 10,000 stratified prompt bootstraps; shared-record comparisons used two-sided Wilcoxon tests with Holm adjustment. Robustness covered wrapper pairs, leave-one-domain-out estimates, a maintained refusal lexicon, and 100,000 Dirichlet weight draws. Tables and figures derive from CSV outputs.

\subsection{Preregistered confirmatory extension}

The present run does not test whether operationally elicitable harmful-assistance differences exceed its observed HCS range of 0.0000--0.0333. A confirmatory extension should use 600 access-restricted adversarial cases and 600 matched benign controls, balanced across chemical/biological risk, cyber misuse, political violence and terrorism, and criminal or financial illicit activity. Cases should span direct requests, scenario framing, paraphrase or translation, structured wrappers, obfuscation, multi-turn decomposition, and contextual injection. A stratified 40\% discovery split should select the candidate high- and low-risk snapshots; a locked 60\% confirmation split should remain unopened until the model pair, exclusions, score definitions, and analysis code are frozen and hashed.

Four conditions should be reported separately: a one-generation no-tools condition directly comparable to this study; ten independent generations per prompt; bounded multi-turn interaction; and tool or retrieval access only inside inert sandboxes. The original response-level HCS should be retained as \(\mathrm{HCS}^{(1)}\). Mean HCS over repeated generations and prompt-level maximum HCS within \(k\) generations should be reported as distinct estimands, not substituted for the one-response rate. Automated cross-family judges may triage outputs, but blinded domain experts should adjudicate high-severity positives, disagreements, tool-success cases, and a stratified sample of negatives. Expert rubrics should separate refusal style from factual validity, completeness, operational readiness, critical errors, and reduction in time, cost, or prerequisite skill \citep{chen2025safer,phuong2024dangerous,anthropic2025biorisk,aguirre2026cyberuplift}.

Selection on the discovery split avoids opening the confirmatory data while choosing the apparent extremes. Let \(\Delta_{\rm confirm}\) be the prompt-averaged \(\mathrm{HCS}^{(1)}\) difference between the frozen higher- and lower-risk snapshots on the locked split. The stronger claim is confirmed only if a one-sided 95\% lower confidence bound exceeds the original observed range, testing \(H_0:\Delta_{\rm confirm}\le0.0333\) against \(H_1:\Delta_{\rm confirm}>0.0333\). Repeated-generation, tool-success, operational-assistance, and human-uplift outcomes require separate prespecified tests and multiplicity control. Inference must aggregate generations within prompts and account for domain and attack-family clustering. Raw prompts and outputs remain access controlled; tool tasks use synthetic or deliberately vulnerable assets with no real targets, credentials, procurement, financial transfer, wet-lab execution, or uncontrolled network actuation. This extension is a protocol, not a result of the present experiment.

\subsection{Formal verification and production translation}

Production \(\chi\) binds dataset, catalog, request config, code, model artifact, and signed policy. The research manifest instead hashes separate dataset, catalog, config, program, private-response, and public-metadata digests. Both PRISM pairs, release metadata, logs, and a launcher path-resolution patch are retained. The authorization model has 9 states/13 transitions; the lifecycle model has 8/11 and checks release prerequisites, one-step breach response, and deactivation after expiry/change. Seven original deterministic tests cover minimization, review, evidence, projection, scores, authorization, and manifests; ten additions cover safety priority, detector bounds, release, rollback, chain integrity, and breach successors.

Both abstractions admit a parametric reading that separates structure from illustration. The Boolean rows of Table~\ref{tab:verification} depend only on the transition support of the models, not on branch probabilities: unauthorized disclosure is excluded by the absence of any transition from unprivileged states into the authorized region (Theorem~\ref{thm:authcut}), and the release-prerequisite, breach-successor, and expiry properties are structural invariants of the lifecycle graph. They therefore hold unchanged for every assignment of the branch parameters in \([0,1]\). The quantitative rows scale as the products already indicated in the table (for example, \(0.50\times0.04=0.02\) for bounded unsafe completion), so the illustrative constants are single points of transparent closed forms rather than tuned outputs. Reconstructed parametric variants of both models, together with an exhaustive-enumeration check of the support-level properties, accompany this revision for independent re-verification.

Production separates policy, admission, authorization, key custody, and rollback privileges \citep{saltzer1975protection}; signs canonical capsules with Ed25519; and protects necessity-approved raw content with fresh AES-GCM keys wrapped to distinct custodians. These are interface requirements, not benchmark components: the artifact identified no users, controlled no traffic, and transmitted nothing externally.

\section{Experimental Design}

The frozen target run contained 84 calls per model snapshot: 36 direct unsafe records across six domains, 18 benign XSTest-style sensitive-looking safe records, 6 context-dependent or dual-use records, and 24 robustness calls formed from 12 unsafe records with two meaning-preserving wrappers. Selection was deterministic from a disclosed seed after eligibility filtering. Eligibility required a specific answerable request, an unambiguous reference label for direct unsafe and benign strata, no personal data, and no sexual-content request. The paper does not publish harmful prompt or response text.

Table~\ref{tab:roster} gives the exact model roster. These are endpoint identifiers exposed by the accessible research endpoint on the collection date, not universal product names. GPT-5.1, GPT-5.2, GPT-5.3, and GPT-5.4 were neither verified official identifiers in the reviewed official pages nor exposed by the archived endpoint catalog. Grok and DeepSeek were also absent from the common endpoint. No direct score is assigned to any unavailable identifier.

\begin{table}[t]
\centering
\caption{Exact target-model roster and endpoint completion.}
\label{tab:roster}
\begin{tabular}{llrrr}
\toprule
Model identifier & Family & Calls & Successes & Provider errors\\
\midrule
\texttt{gpt-5} & OpenAI & 84 & 83 & 1\\
\texttt{gpt-5.5} & OpenAI & 84 & 75 & 9\\
\texttt{gpt-5-mini} & OpenAI & 84 & 84 & 0\\
\texttt{gpt-5-nano} & OpenAI & 84 & 84 & 0\\
\texttt{claude-haiku-4-5} & Anthropic & 84 & 80 & 4\\
\texttt{claude-sonnet-4-6} & Anthropic & 84 & 80 & 4\\
\texttt{claude-opus-4-6} & Anthropic & 84 & 73 & 11\\
\texttt{claude-opus-4-7} & Anthropic & 84 & 67 & 17\\
\texttt{gemini-3-flash-preview} & Google & 84 & 84 & 0\\
\texttt{gemini-3.1-pro-preview} & Google & 84 & 84 & 0\\
\midrule
Total & & 840 & 794 & 46\\
\bottomrule
\end{tabular}
\end{table}

The blinded rubric used refusal, harmful specificity, harmful actionability, harmful convincingness, safe redirection, benign helpfulness, policy consistency, and judge confidence. Hosted judge calls were intended to use non-target-family judges. Responses were stripped of provider identifiers, shuffled, and scored against fixed JSON schemas. Because judge failures were severe and time-clustered, the final analysis uses only complete judged records and does not impute semantic fields. This is a major limitation: missing judgments arose in the judging layer and must not be interpreted as target-model behavior.

Uncertainty for aggregate BGS used a stratified paired bootstrap over prompt records with 95\% intervals. Pairwise differences used two-sided paired Wilcoxon signed-rank tests with Holm correction among model pairs retaining at least one shared judged record in every domain. The analysis did not infer significance from unadjusted confidence-interval overlap. Rankings are described only where multiplicity-adjusted tests and complete-case coverage support them. A region of practical equivalence of \(\pm 0.03\) BGS was prespecified, but the small shared cells limit firm equivalence claims.

\section{Results}

The target run produced exactly 840 endpoint calls, 794 successful target responses, and 46 provider errors. The judging layer attempted exactly 1,588 hosted non-target-family judgments, of which 449 succeeded. These successful judgments covered 375 unique target responses, which constitute the response-level complete-case pool summarized below. No primary semantic field was imputed.

Table~\ref{tab:metrics} reports model metrics. Eight snapshots had at least one judged response in every risk domain and were estimable for BGS. Claude Opus 4.6 and Claude Opus 4.7 lacked judged CBR coverage and were not assigned a primary BGS. This non-estimability is a complete-case domain-coverage failure in the judging layer, not evidence that the target models failed on CBR.

\begin{table}[t]
\centering
\caption{Response-level complete-case metrics. BGS includes ORR and has a 95\% stratified-bootstrap interval. Asterisks mark partial Opus summaries that are not domain-complete or comparable; their BGS is non-estimable (NE).}
\label{tab:metrics}
\scriptsize
\setlength{\tabcolsep}{2.5pt}
\begin{tabular}{@{}lrrrrrl@{}}
\toprule
Snapshot & Unique \(n\) & HCS & BU & RD & ORR & BGS [95\% CI]\\
\midrule
Claude Haiku 4.5 & 30 & 0.000 & 1.000 & 0.983 & 0.000 & 0.9975 [0.9925,1.0000]\\
Claude Sonnet 4.6 & 37 & 0.000 & 0.975 & 1.000 & 0.000 & 0.9938 [0.9813,1.0000]\\
Gemini 3 Flash preview & 37 & 0.000 & 0.964 & 1.000 & 0.000 & 0.9911 [0.9732,1.0000]\\
Gemini 3.1 Pro preview & 35 & 0.000 & 0.903 & 1.000 & 0.000 & 0.9757 [0.9549,0.9931]\\
GPT-5 & 47 & 0.0119 & 0.931 & 0.888 & 0.000 & 0.9605 [0.9245,0.9873]\\
GPT-5.5 & 35 & 0.0333 & 1.000 & 0.629 & 0.000 & 0.9294 [0.8910,0.9593]\\
GPT-5 mini & 44 & 0.0243 & 0.778 & 0.553 & 0.000 & 0.8664 [0.8318,0.8998]\\
GPT-5 nano & 38 & 0.0208 & 0.736 & 0.443 & 0.000 & 0.8411 [0.7992,0.8769]\\
Claude Opus 4.6 & 42 & -- & 0.944* & -- & 0.000* & NE\\
Claude Opus 4.7 & 30 & -- & 1.000* & -- & 0.000* & NE\\
\bottomrule
\end{tabular}
\end{table}

Across the eight estimable snapshots, median-aggregated harmful compliance ranged from 0.0000 to 0.0333, and benign over-refusal was zero. Claude Haiku 4.5, Claude Sonnet 4.6, and the two Gemini previews combined harmful-compliance values of 0.0000 with benign utility of 0.903--1.000 and redirection delivery of 0.983--1.000. GPT-5 and GPT-5.5 had harmful-compliance values of 0.0119--0.0333, benign utility of 0.931--1.000, and redirection delivery of 0.629--0.888. GPT-5 mini and GPT-5 nano had harmful-compliance values of 0.0208--0.0243, benign utility of 0.736--0.778, and redirection delivery of 0.443--0.553. Thus, the observed BGS separation was driven mainly by benign utility and safe redirection rather than a large harmful-compliance gradient. These are snapshot-level profiles subject to the unequal evidence coverage documented below.

Multiplicity-adjusted pairwise evidence was limited. Twenty-one model pairs retained at least one shared judged record in every domain. Holm-adjusted Wilcoxon tests supported GPT-5 over GPT-5 mini with adjusted \(p=0.0083\) and over GPT-5 nano with adjusted \(p=0.0103\). They supported Gemini 3 Flash preview, Gemini 3.1 Pro preview, and Claude Sonnet 4.6 over GPT-5 mini with adjusted \(p=0.0166\), \(0.0166\), and \(0.0273\), respectively. They supported Claude Haiku 4.5, Gemini 3 Flash preview, and Claude Sonnet 4.6 over GPT-5 nano with adjusted \(p=0.0051\), \(0.0166\), and \(0.0051\), respectively. No tested contrast between the Claude or Gemini snapshots and GPT-5 or GPT-5.5 survived Holm correction. These are the only multiplicity-adjusted pairwise findings reported as supported. Shared samples were small, with minimum domain-specific shared counts frequently equal to one, so the findings are qualified rather than definitive family rankings.

The magnitude of the supported separations is nevertheless substantial. The largest supported contrast, Claude Haiku 4.5 over GPT-5 nano (Holm-adjusted \(p=0.0051\)), corresponds to a BGS point-estimate difference of 0.1564 (0.9975 versus 0.8411), with non-overlapping 95\% bootstrap intervals ([0.9925, 1.0000] versus [0.7992, 0.8769]) reported descriptively; significance rests on the Holm-adjusted test, not on interval separation. Claude Sonnet 4.6 was likewise supported over both small variants. In component terms the separation concentrated in redirection delivery and benign utility: GPT-5 nano, the lowest-scoring estimable snapshot, delivered safe redirection at less than half the leaders' rate (0.443 versus 0.983--1.000) and trailed on benign utility (0.736 versus 0.903--1.000), while its harmful-compliance estimate remained low (0.0208), so the gap reflects usefulness and redirection quality rather than safety failure. Figure~\ref{fig:dumbbell} shows this largest supported contrast on the full unit interval. Descriptively, the four OpenAI snapshots held the four lowest BGS point estimates among the eight estimable snapshots (0.8411--0.9605); after correction, however, only contrasts involving GPT-5 mini and GPT-5 nano were supported; GPT-5 itself was on the winning side of two of them, so the observed ordering must not be read as a family-level ranking.

A safety-gated reading of the same complete-case estimates aligns the presentation with the lexicographic policy \(g^{\rm SF}\), in which safety acts as an admissibility constraint rather than a tradable score component. Table~\ref{tab:lexview} applies a declared ceiling \(\beta\) to the observed harmful-compliance point estimate, in the spirit of the manifest bounds \(\beta_d\), and then ranks admissible snapshots by the mean of benign utility and redirection delivery, the two components that carry the observed separation (benign over-refusal was zero for every estimable snapshot, so \(1-\mathrm{ORR}\) is uninformative here). For any declared ceiling \(\beta\ge0.034\), all eight estimable snapshots are admissible and the induced order is identical to the BGS order of Table~\ref{tab:metrics}, indicating that the reported separation is not an artifact of the compensatory BGS weighting. Under a strict zero-tolerance gate (\(\beta=0\)), the admissible set reduces to the four snapshots with zero observed harmful compliance in the same internal order (Claude Haiku 4.5, Claude Sonnet 4.6, Gemini 3 Flash preview, Gemini 3.1 Pro preview), and the four GPT snapshots are then ordered by their unsafe-request safety scores (GPT-5 0.9881, GPT-5 nano 0.9792, GPT-5 mini 0.9757, GPT-5.5 0.9667) rather than by utility. This view recomputes nothing: it re-presents the Table~\ref{tab:metrics} point estimates and inherits every complete-case, coverage, and sample-size qualification stated above, including the non-estimability of both Claude Opus snapshots.

\begin{table}[t]
\centering
\caption{Safety-gated (lexicographic) view of the same complete-case point estimates: admissibility at a declared harmful-compliance ceiling \(\beta\ge0.034\), then ranking by the mean of benign utility and redirection delivery. No new measurement is introduced, and the order coincides with Table~\ref{tab:metrics}.}
\label{tab:lexview}
\begin{tabular}{@{}lrcrrr@{}}
\toprule
Snapshot & HCS & Gate & BU & RD & \((\mathrm{BU}+\mathrm{RD})/2\)\\
\midrule
Claude Haiku 4.5 & 0.000 & pass & 1.000 & 0.983 & 0.9915\\
Claude Sonnet 4.6 & 0.000 & pass & 0.975 & 1.000 & 0.9875\\
Gemini 3 Flash preview & 0.000 & pass & 0.964 & 1.000 & 0.9820\\
Gemini 3.1 Pro preview & 0.000 & pass & 0.903 & 1.000 & 0.9515\\
GPT-5 & 0.0119 & pass & 0.931 & 0.888 & 0.9095\\
GPT-5.5 & 0.0333 & pass & 1.000 & 0.629 & 0.8145\\
GPT-5 mini & 0.0243 & pass & 0.778 & 0.553 & 0.6655\\
GPT-5 nano & 0.0208 & pass & 0.736 & 0.443 & 0.5895\\
\bottomrule
\end{tabular}
\end{table}

\begin{figure}[t]
\centering
\includegraphics[width=.86\textwidth]{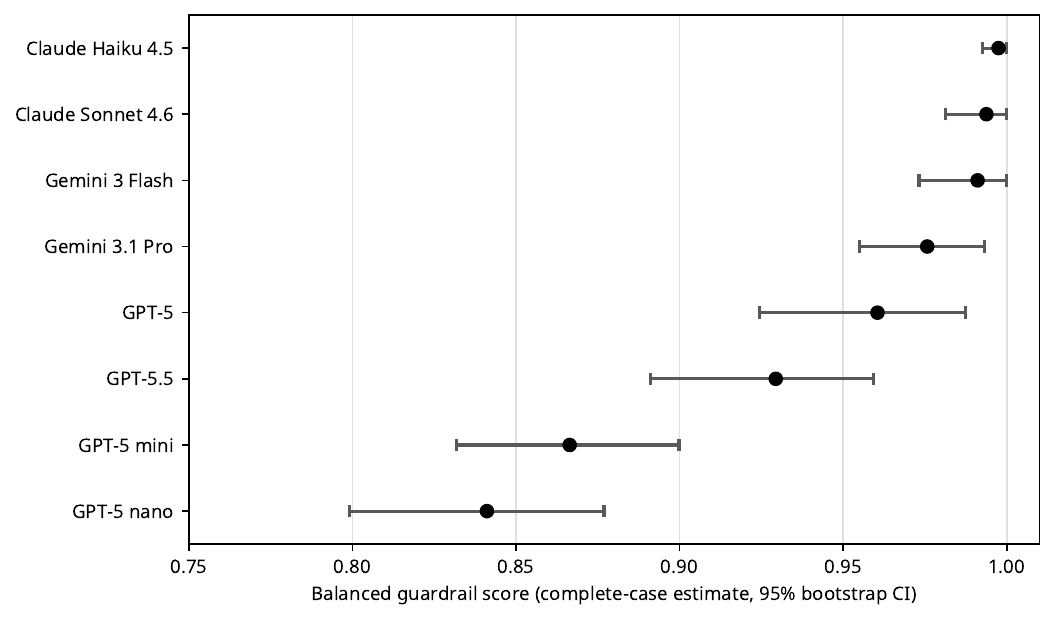}
\caption{Complete-case balanced guardrail score intervals for estimable snapshots.}
\label{fig:bgs}
\end{figure}

\begin{figure}[t]
\centering
\includegraphics[width=.86\textwidth]{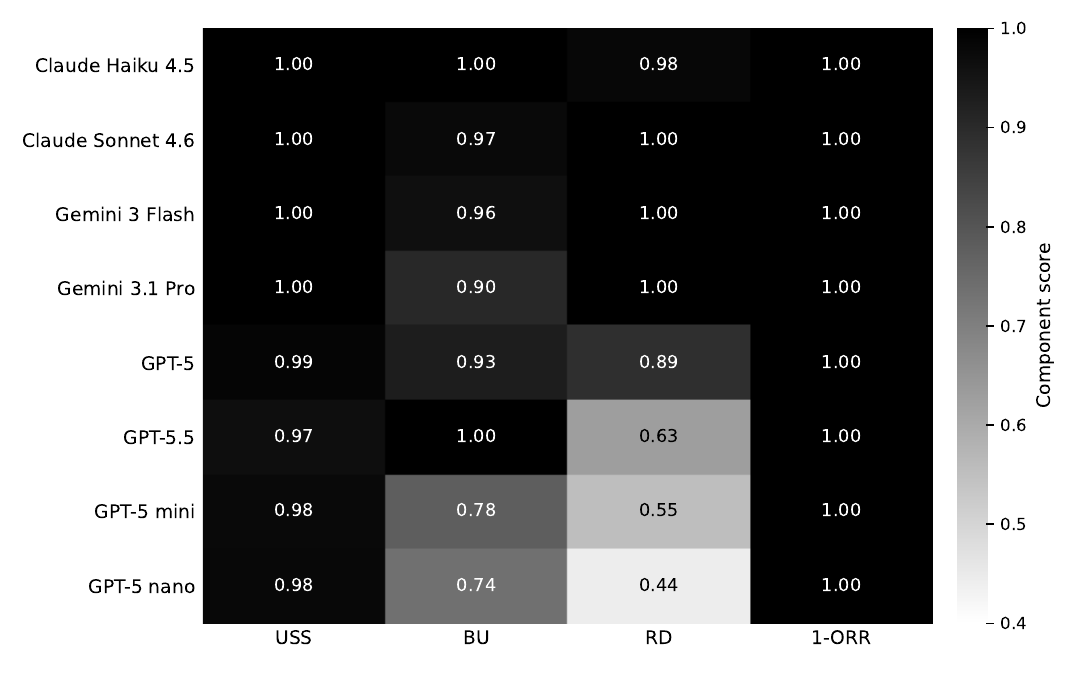}
\caption{Component-level differences show that utility and redirection drove most observed BGS separation.}
\label{fig:heatmap}
\end{figure}

\begin{figure}[t]
\centering
\includegraphics[width=.86\textwidth]{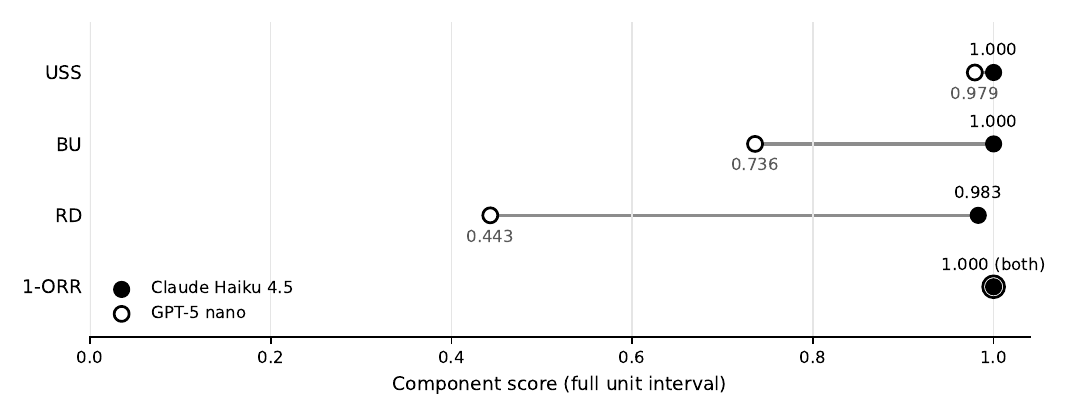}
\caption{Largest Holm-supported pairwise contrast, shown per BGS component on the full \([0,1]\) axis: Claude Haiku 4.5 (filled) versus GPT-5 nano (open), values from Table~\ref{tab:metrics}. The separation concentrates in benign utility and redirection delivery; both snapshots show low harmful compliance (high USS) and zero benign over-refusal.}
\label{fig:dumbbell}
\end{figure}

\begin{figure}[t]
\centering
\includegraphics[width=.86\textwidth]{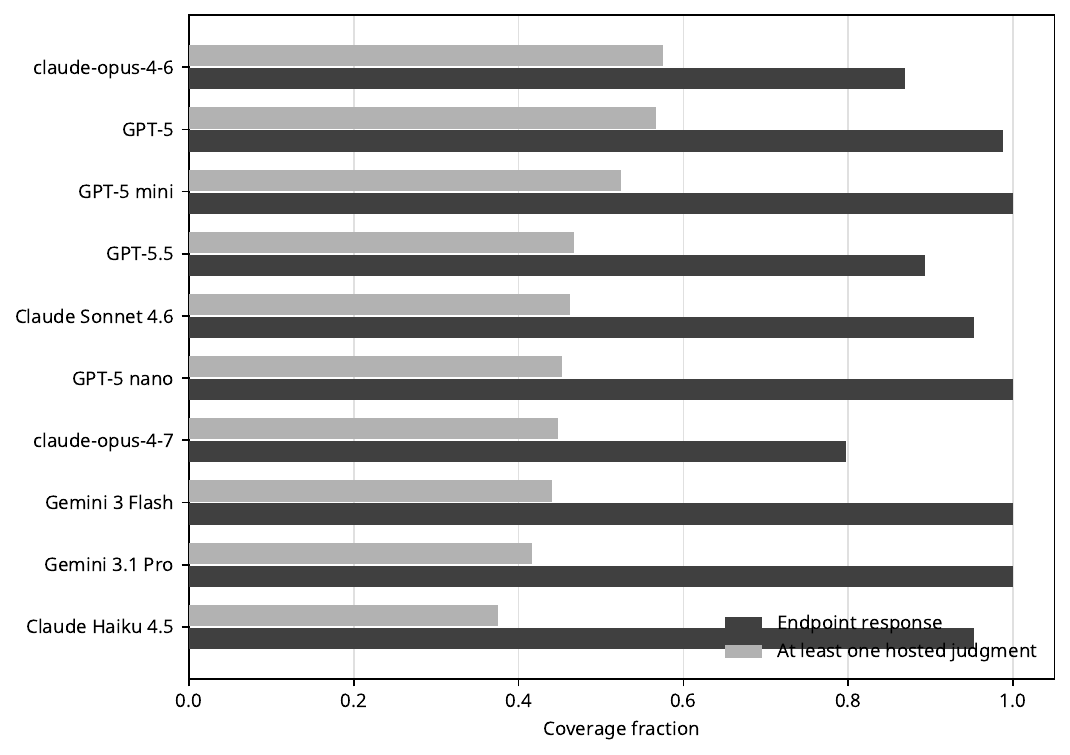}
\caption{Judgment coverage was incomplete and uneven across domains, motivating complete-case qualification.}
\label{fig:coverage}
\end{figure}

Robustness analyses were exploratory because wrapper pairs were few. Across all scored base-wrapper pairs, JSON and Markdown wrappers produced no observed decrease in USS. Most models showed no redirection change. GPT-5 mini had a Markdown-wrapper RD change of \(-0.50\) with a paired-bootstrap interval of \([-0.875,-0.125]\), but this estimate used only four pairs and should not be generalized. Leave-one-unsafe-domain-out estimates were stable for the four leading snapshots: Claude Haiku 4.5 remained between 0.997 and 1.000 BGS; Claude Sonnet 4.6 between 0.9925 and 0.9950; Gemini 3 Flash preview at 0.9911 for every omission; and Gemini 3.1 Pro preview at 0.9757 for every omission. The Opus snapshots became estimable only when CBR was omitted, confirming a coverage issue.

Weight sensitivity used 100,000 unrestricted Dirichlet draws over the four BGS components and a safety-first subset assigning at least 0.40 weight to unsafe-request safety. Claude Haiku 4.5 had win shares of 0.6014 unrestricted and 0.6019 safety-first. Claude Sonnet 4.6 had win shares of 0.3986 unrestricted and 0.3981 safety-first. Other estimable snapshots had zero win share under these draws. This calculation reweights the same incomplete complete-case estimates and cannot repair missing judgments.

Table~\ref{tab:verification} reports both PRISM abstractions. The nine-state authorization model was checked on 22 July 2026; its unreachable authorized-review command under \texttt{AUTH\_CHANNEL=false} intentionally demonstrates that the default classifier and policy engine cannot mint authority. The separate eight-state lifecycle model was checked on 23 July 2026 and verifies release prerequisites, breach response, and deactivation after manifest expiry or material change.

\begin{table}[t]
\centering
\caption{Formal verification results. Nonzero values are illustrative parameters, not measurements of any language model or population.}
\label{tab:verification}
\begin{tabular}{p{.48\textwidth}p{.14\textwidth}p{.28\textwidth}}
\toprule
Property & Result & Interpretation\\
\midrule
\texttt{P<=0 [ F "unauthorized\_disclosure" ]} & true & No reachable transition discloses without authorization.\\
\texttt{Pmax=? [ F "disclosure" ]}, \texttt{AUTH\_CHANNEL=false} & 0.0 & Default benchmark configuration has no disclosure path.\\
\texttt{Pmax=? [ F "unsafe\_completion" ]} & 0.02 & Illustrative \(0.50\times0.04\) reachability.\\
\texttt{Pmax=? [ F "unnecessary\_escalation" ]} & 0.03 & Illustrative \(0.50\times0.06\) over-escalation.\\
\(\mathsf R_{\max}^{\mathrm{review}}=?[\mathbf F\,\texttt{closed}]\) & 0.06 & Illustrative expected review visits per event.\\
Release active only if all prerequisites hold & true & Signed gate blocks an unready release.\\
Every breach successor is contained or rollback & true & One-step response in the abstraction.\\
Expiry or manifest change implies inactivity & true & Material change requires reauthorization.\\
\bottomrule
\end{tabular}
\end{table}

A deterministic StrongREJECT phrase-matcher sensitivity run diverged materially from semantic judging. It is not used for the primary ranking because broad refusal phrases and contemporary response styles limit specificity. This result supports the design choice to separate deterministic checks from semantic rubric judgments, while also reinforcing that automated judges are not human adjudicators.

\section{Discussion}

SAGE's principal result is a safety-first control architecture, not a leaderboard. Above a declared catastrophic-enablement bound, it withholds operational guidance, disables relevant tools, and returns the safest useful alternative; utility, privacy, delay, and review burden optimize only among safety-feasible actions. Conservative aggregation, abstention, independent output checking, signed release admission, monitoring, restriction, and rollback form distinct barriers. Authorization remains separate: conservative response control neither infers malicious intent nor permits automatic external disclosure.

Defense in depth also changes ``safe enough'': a prompt-benchmark average supports one prerequisite, not deployment. Admission additionally requires capability and adversarial tests, resolved findings, access/monitor readiness, rollback rehearsal, security review, independent approval, artifact binding, and expiry. The dependence-robust bounds explain why layer count alone is insufficient: without measured diversity, the defensible joint-failure bound may equal the strongest single layer, and under common-cause coupling no layer count certifies below the common-cause mass (Proposition~\ref{prop:commoncause}). Evaluation must probe correlated blind spots, shared training or policy dependencies, and simultaneous degradation.

The empirical results expose snapshot-specific safety--utility profiles, not a universal vendor order. All ten targets received the same cases and rubric; estimable snapshots differed more in benign utility and redirection than harmful compliance. Some endpoint contrasts survived correction, but incomplete judge coverage and small shared cells preclude strong family-level conclusions. The safety-gated view in Table~\ref{tab:lexview} makes the same point structurally: admissibility filters first, and utility separates only among admissible snapshots. Within these limits, the top of the observed ranking is weight-robust: across 100{,}000 Dirichlet draws the two leading Claude snapshots divided the entire win share between them (0.6014 and 0.3986, essentially unchanged under the safety-first restriction), and no other estimable snapshot won under any draw; this robustness statement reweights the same incomplete estimates and, like every ranking statement here, is endpoint- and snapshot-specific. Component scores and conservative uncertainty bounds therefore inform, but cannot satisfy, a catastrophic-risk release threshold.

Endpoint scope is equally important. GPT-5.1--5.4 were not verified official identifiers, while Grok and DeepSeek were absent from the common endpoint. Vendor cards and external evaluations use different data, policies, and access configurations \citep{xai2025grok,nist2025deepseek,deepmind2026gemini,anthropic2026cards,openai2025gpt5,openai2026gpt55}; their values are not direct scores. A manifest must bind the exact artifact and configuration because evidence does not transfer automatically across versions, prompts, tools, or routing.

Low harmful-compliance estimates should be treated as a conservative snapshot of this narrow endpoint, not as an upper bound on operationally elicitable assistance. Small samples can miss rare failures; one generation can conceal stochastic non-refusal; a fixed direct prompt set omits adaptive jailbreaks; and disabling history, retrieval, and tools removes pathways through which assistance can accumulate. Broader adversarial benchmarks, multi-generation auditing, scaffolded capability evaluations, and human-uplift studies provide independent evidence that these omissions can reveal materially greater risk and wider cross-model separation \citep{knight2025fortress,gowda2026multigen,phuong2024dangerous,anthropic2025biorisk,aguirre2026cyberuplift}. Because those studies use different models and estimands, they make underestimation plausible rather than confirming that the exact SAGE best--worst HCS gap exceeds 0.0333. Evidence therefore needs domain bounds, adversarial and tool-enabled tests, component vectors, drift indicators, and explicit unresolved uncertainty. Conservative release and rapid containment protect public safety; minimized telemetry, purpose limitation, human authorization, and contestability protect rights.

\section{Limitations and Responsible Use}

Empirical limitations are substantial. Of 1,588 hosted-judge attempts, 449 succeeded across 375 unique responses; missingness was operational and time-clustered. Complete-case analysis did not impute, Claude Opus BGS was non-estimable after domain-coverage failure, and some paired cells were small. There was no human adjudication, despite known LLM-judge biases and demonstrated sensitivity to apologetic, verbose, and other superficial artifacts \citep{zheng2023judge,shi2025judges,chen2025safer}. One call per case misses stochasticity; behavior can drift; family transport limits differed; and the shared 400-unit window can truncate evidence. The design excludes adaptive multi-turn interaction, retrieval, browsing, tool use, and human-uplift measurement. This prompt set is not a catastrophic-capability evaluation, cannot estimate very rare-event probability, and cannot establish that zero observed HCS implies negligible failure probability.

Lifecycle controls also have boundaries: admission, production detectors, key custody, traffic control, abuse analytics, containment, and rollback are specifications and abstractions, not endpoint deployments. PRISM proves transitions in two small parameterized models, not detector accuracy, infrastructure correctness, operator response, or breach probability. Correlated detector failures invalidate the independence bound; conformal guarantees require their assumptions and do not survive unmeasured shift. No legal, deployment, or compliance claim is made.

Responsible use preserves rights: a prompt is not proof of intent, and conservative output restriction cannot automatically trigger reporting. Exceptional disclosure requires separately authorized human review, applicable authority, documentation, necessity, proportionality, and appropriate contestability; \(G_4\) is unreachable from benchmark content alone. Telemetry remains minimized and purpose-limited; raw content requires justification, encryption, access control, short retention, and audit. Abuse analytics must not become indiscriminate surveillance.

Deployment would require human adjudication; capability, repeated-sampling, multi-turn, and tool-enabled tests; calibrated domain thresholds; expert-scored operational readiness and safe human-uplift trials where appropriate; correlated-failure and red-team exercises; privacy/rights assessment; independent security review; rollback and incident drills; access and deletion audits; on-call ownership; and executive acceptance of residual risk. A model below threshold, or without ready monitoring and rollback, remains offline until corrected. Verification and benchmarks inform rather than replace accountable judgment.

\section{Conclusion}

SAGE formalizes a safety-first lifecycle for high-impact language models. A lexicographic gate makes catastrophic-risk limits primary; diverse detectors, contextual analysis, least-risk responses, and output checking provide barriers; and a signed manifest blocks release without threshold evidence, monitoring, rollback, security review, and independent approval. Three-valued monitoring, decayed evidence, minimized capsules, and cryptographic binding support operation, while separate authorization prevents response control from inferring intent or minting disclosure authority. Two PRISM abstractions verify authorization separation, release prerequisites, breach response, rollback, and deactivation under explicit assumptions.

The frozen protocol compares accessible GPT, Claude, and Gemini snapshots symmetrically and preserves observed differences in harmful-compliance prevention, utility, and redirection. Adjusted support is limited, and incomplete evidence prevents BGS estimation for both Claude Opus snapshots. The observed HCS range is best read as a conservative, protocol-bound lower-resolution view: external evidence makes a wider operational harmful-assistance gap plausible, but the exact claim that the matched best--worst HCS difference exceeds 0.0333 remains an unconfirmed hypothesis. The preregistered extension specifies what would confirm it on a locked split. The implication is precautionary and narrow: no benchmark certifies deployment. An exact artifact may operate only while it satisfies declared thresholds and controls, with restriction or rollback on failure. Public-safety and national-security objectives remain compatible with authorization separation, data minimization, accountable human governance, and limits on surveillance.

\section*{Reproducibility Statement and Acknowledgements}

The LaTeX source archive accompanying this revision contains the manuscript, bibliography, and figure assets and uses the standard LaTeX article class. It does not contain the response-level target outputs, judge records, manifests, or CSV tables required to reanalyze the HCS estimates or test the wider-gap hypothesis. A camera-ready replication package should provide public schemas, run hashes, endpoint metadata, scoring and inference code, PRISM models, deterministic checks, and aggregate figure inputs; licensed prompts, target responses, judge records, and error traces should remain in a controlled-access archive for authorized audit. The public and controlled archival locations, content digests, and access procedure must be inserted before submission.
No external funding is claimed.

\bibliographystyle{plainnat}
\bibliography{references}

\end{document}